\def\munderbar#1{\underline{\sbox\tw@{$#1$}\dp\tw@\z@\box\tw@}}
\newtheorem{theorem}{Theorem}[section]
\newtheorem{definition}[theorem]{Definition}
\newtheorem{lemma}[theorem]{Lemma}
\newtheorem{proposition}[theorem]{Proposition}
\newcommand{\be}{\begin{equation}}
\newcommand{\ee}{\end{equation}}
\newcommand{\bea}{\begin{equation*}\begin{aligned}}
\newcommand{\eea}{\end{aligned}\end{equation*}}
\newcommand{\ds}{\displaystyle}
\newcommand{\R}{\mathbb{R}}
\newcommand{\Min}{\min\limits_}
\newcommand{\Sup}{\sup\limits_}
\newcommand{\Inf}{\inf\limits_}
\newcommand{\Tr}[1]{\Trace \big[ #1 \big]}
\newcommand{\wh}{\widehat}
\newcommand{\mc}{\mathcal}
\newcommand{\mbb}{\mathbb}
\newcommand{\norm}[1]{\left\|#1\right\|}
\newcommand{\cov}{\Sigma} 
\newcommand{\covsa}{\wh{\Sigma}} 
\newcommand{\Pnom}{\wh{\mbb P}}
\newcommand{\QQ}{\mbb Q}
\DeclareMathOperator{\Trace}{Tr}
\DeclareMathOperator{\st}{s.t.}
\newcommand{\PSD}{\mathbb{S}_{+}} 
\newcommand{\PD}{\mathbb{S}_{++}} 
\newcommand{\Let}{\triangleq}
\newcommand{\opt}{^\star}
\newcommand{\eps}{\varepsilon}
\newcommand{\EE}{\mathds{E}}
\newcommand{\half}{\frac{1}{2}}
\newcommand{\dualvar}{\kappa}
\newcommand{\ie}{{\em i.e.}}
\newcommand{\Sym}{\mbb S}
\newcommand{\m}{\mu}
\newcommand{\msa}{\wh \m}
\newcommand*{\addFileDependency}[1]{
  \typeout{(#1)}
  \@addtofilelist{#1}
  \IfFileExists{#1}{}{\typeout{No file #1.}}
}
\newcommand*{\myexternaldocument}[1]{%
    \externaldocument{#1}%
    \addFileDependency{#1.tex}%
    \addFileDependency{#1.aux}%
}
\icmltitlerunning{}
\begin{document}

\twocolumn[
\icmltitle{Sequential Domain Adaptation \\ by Synthesizing Distributionally Robust Experts}



\begin{icmlauthorlist}
\icmlauthor{Bahar Taskesen}{epfl}
\icmlauthor{Man-Chung Yue}{hong}
\icmlauthor{Jos\'{e} Blanchet}{su}
\icmlauthor{Daniel Kuhn}{epfl}
\icmlauthor{Viet Anh Nguyen}{su,vin}
\end{icmlauthorlist}

\icmlaffiliation{hong}{Department of Applied Mathematics, The Hong Kong Polytechnic University}
\icmlaffiliation{epfl}{Risk Analytics and Optimization Chair, Ecole Polytechnique F\'{e}d\'{e}rale de Lausanne}
\icmlaffiliation{su}{Department of Management Science and Engineering, Stanford University}
\icmlaffiliation{vin}{VinAI Research, Vietnam}
\icmlcorrespondingauthor{Bahar Taskesen}{bahar.taskesen@epfl.ch}

\icmlkeywords{Machine Learning, ICML}

\vskip 0.3in]
\printAffiliationsAndNotice{} 
\begin{abstract}
    
    Least squares estimators, when trained on a few target domain samples, may predict poorly. Supervised domain adaptation aims to improve the predictive accuracy by exploiting additional labeled training samples from a source distribution that is close to the target distribution. Given available data, we investigate novel strategies to synthesize a family of least squares estimator experts that are robust with regard to moment conditions. When these moment conditions are specified using Kullback-Leibler or Wasserstein-type divergences, we can find the robust estimators efficiently using convex optimization. We use the Bernstein online aggregation algorithm on the proposed family of robust experts to generate predictions for the sequential stream of target test samples. Numerical experiments on real data show that the robust strategies may outperform non-robust interpolations of the empirical least squares estimators.

\end{abstract}
\section{Introduction}
\label{sect:intro}
\begin{figure*}
    \centering
    \includegraphics[width=\textwidth]{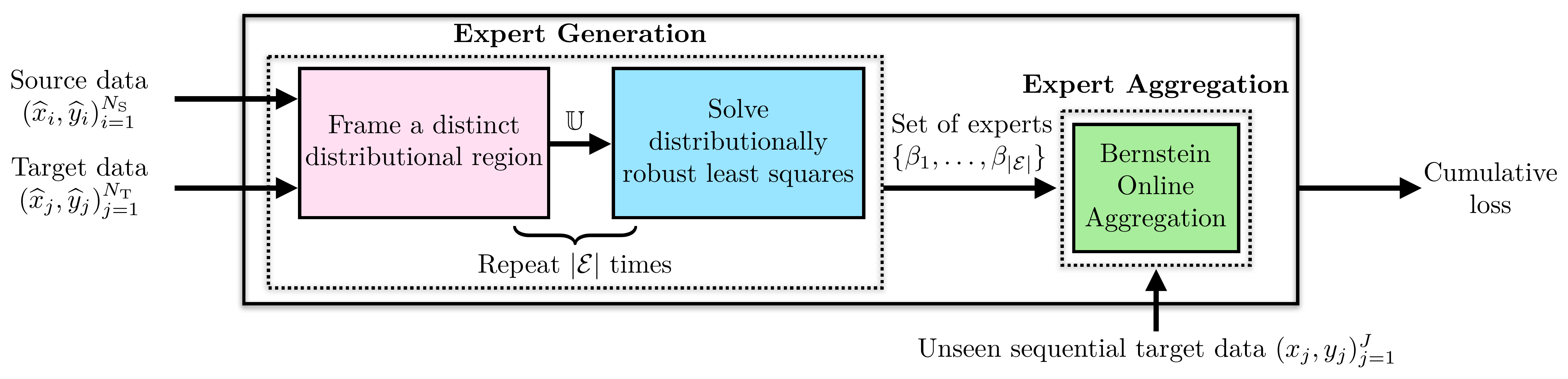}
    \vspace{-.8cm}
    \caption{The architecture of our framework for supervised domain adaptation when the unseen target test samples arrive sequentially.}
    \vspace{-5mm}
    \label{fig:method}
\end{figure*}


    
    A natural approach to improving predictive performance in data-scarce tasks involves translating informative signals from a data-abundant source domain to the data-scarce target domain. This transfer of knowledge is commonly referred to as domain adaptation or transfer learning, and it is increasingly applied in a wide range of settings, see for example~\citet{ref:wilson2020survey, ref:chu-wang-2018-survey, ref:weiss2016survey} and~\citet{ref:redko2019advances}. 
    We consider the supervised domain adaptation setting with scarce labeled target data. The key challenge here is the absence of meaningful data to tune any parameters. However, in many practically relevant applications, new data will arrive sequentially to enrich the information on the target domain. In this case, many online algorithms can be utilized to adaptively learn the best predictor on the target domain, which also guarantee optimal asymptotic regrets~\cite{ref:lattimore2020bandit}. 

    In this paper, we take a pragmatic approach to resolve a specific setup of the domain adaptation problem. We assume access to a scarce labelled target data, and the future target data arrives sequentially.
    For example, consider understanding the dynamics of ride-sharing platforms requires insights about the demand and supply from both sides of the market. These insights are signalled through the ride fares, which can be explained by characteristics such as the travel distances and the origin-destination pairs of the trips, the time of the day as well as the weather conditions. The capability to correctly predict ride fares directly translates into improved profit forecasts, and thus it vitally supports the growth of new-coming platforms. In a competitive market, a follower (e.g., Lyft) needs to target a slightly different market segment than the leader (e.g., Uber) who had entered earlier. Thus, the demand and supply characteristics for the follower may differ from those of the leader. Nevertheless, as both platforms provide on-demand transportation, it is reasonable to assume that their supply and demand dynamics are similar. The follower, who possesses limited data, can query demand on the leader’s platform to collect data in order to leap forward in its predictive precision.
    Our approach to solve this problem is illustrated in Figure~\ref{fig:method} and it consists of two components:
    \begin{enumerate}[leftmargin =4mm, itemsep=0.5mm]
\vspace{-.3cm}
        \item \textbf{Expert Generation Module:} This module generates a set of competitive experts $\mc E$ by fine-tuning the explanatory power of the source domain data and harnessing the signal guidance from the scarce target domain data.
        \item \textbf{Expert Aggregation Module:} Acting on the sequential arrival of the unseen target data, this module aggregates the predictive capability of the generated experts via an online aggregation mechanism. In this work we will use the Bernstein Online Aggregation mechanism. 
    \end{enumerate}

 We will propose two ways to generate the experts. The first approach generates experts corresponding to optimal decisions along a path, with the intention to interpolate between the source and the target distributions. We will consider two types of trajectories, guided by either the Kullback-Leibler or the Wasserstein divergence. The second approach generates distribution regions around both the source and the target. The intersection of these regions is used to generate distributionally robust experts. The geometrical intuition is to find the ``direction" induced by the aforementioned divergences, in which the source data can explain the target data. Once the experts are deployed, the aggregation mechanism is executed without re-adapting the experts.
 
   Our ultimate goal is to ensure a competitive performance in the short term and not in the asymptotic regime when the number of test samples from the target domain tends to infinity. Indeed, as soon as the target sample size is sufficient, training the machine learning model on all available target data becomes more attractive. From a short term horizon benchmark, our approach offers an appealing \emph{warm start} for online training procedure, and it may also lead to a faster convergence rate depending on the underlying algorithm.

\textbf{Contributions.}  Our paper explores the expert generation problem in the context of supervised domain adaptation.

    \begin{itemize}[leftmargin = 2.5mm, itemsep=0.5mm]
\vspace{-.3cm}
        \item We introduce a novel framework to synthesize a family of robust least squares experts by altering various moment-based distribution sets. These sets gradually interpolate from the source information to the target information, capturing different belief levels on the explanatory power of the source domain onto the target domain. 
        
        \item We present two intuitive strategies to construct the sets of moment information, namely the ``Interpolate, then Robustify'' and the ``Surround, then Intersect'' strategies. Both strategies are simply characterized by two parameters representing the aforementioned explanatory power of belief of the source domain and the level of desired robustness.
        
        \item We show that when the moment information
        is prescribed using a Kullback-Leibler or a Wasserstein-type divergence, the experts are efficiently formed by solving convex optimization problems, that can even be solved by a first-order gradient descent algorithm or off-the-shelf solvers.
    \end{itemize}
    \vspace{-.3cm}

    This paper is structured as follows. Section~\ref{sec:problem} delineates the problem setup and describes in details two common strategies to generate experts: the convex combination and the reweighting strategies. Section~\ref{sect:DRLR} introduces our framework to generate experts, while Section~\ref{sec:IR} and~\ref{sec:SI} dive into details about our ``Interpolate, then Robustify'' and our ``Surround, then Intersect'' strategies, respectively. Section~\ref{sec:numerical} demonstrates experimentally that the proposed robust strategies systematically outperform non-robust interpolations of the empirical least squares estimators.
    
\textbf{Literature Review.}
 Domain adaptation arises in various applications including natural language processing~\citep{ref:sogaard2013semi, ref:li2012literature, ref:jiang-zhai-2007-instance, ref:blitzer2006domain}, survival analysis~\citep{ref:li2016transfer} and computer vision~\citep{ref:WANG2018135, Csurka2017}.
 Domain adaptation methods can be classified into three categories. 
 Unsupervised domain adaptation only requires unlabelled target data, but in large amounts~\citep{ref:ghifary2016deep, ref:baktashmotlagh2013unsupervised, ref:ganin2015unsupervised, ref:wang2020distributionally, ref:long2016unsupervised, ref:ben2007analysis, ref:courty2016optimal}. Semi-supervised domain adaptation requires labelled target data~\citep{ref:yao2015semi, ref:kumar2010co, ref:sindhwani2005co, ref:lopez2013semi,  ref:avishek2011active, ref:de2020adversarial, ref:sun2011two}.
 Finally, supervised domain adaptation only requires scarce labelled target data~\cite{ref:motiian2017unified, ref:motiian2017few, ref:tzeng2015simultaneous, ref:koniusz2017domain}.
 If the target data is scarce and label information is available, supervised domain adaptation outperforms unsupervised domain adaptation~\citep{ref:motiian2017unified}. 
 The domain adaptation literature further ramifies by imposing different distributional assumptions into covariate shift~\citep{ref:shimodaira2000improving, ref:sugiyama2008direct} or label shift~\citep{ref:lipton2018detecting, ref:azizzadenesheli2018regularized}.

 The domain adaptation literature for regression problems focuses primarily on instance-based reweighting strategies~\citep{ref:garcke2014importance, ref:sugiyama2008direct, ref:garcke2014importance, ref:huang2006correcting, ref:CORTES2014domain, ref:chen2016robust}, which aim to minimize some distance between the source and target distributions.
 Most of the instance-based methods solve an optimization problem to find the weights of the instances~\citep{ref:garcke2014importance, ref:cortes2019adaptation}, which may be computationally expensive when data is abundant.
 Other approaches rely on deep learning models to minimize the discrepancy between the domain distributions~\citep{ref:zhao2018adversarial, ref:richard2020unsupervised}.
 The literature on regression for domain adaptation also extends towards boosting-based methods~\citep{ref:Pardoe2010BoostingFR}, and 
 deep learning methods~\citep{ref:SALAKEN2019565}.

    Our paper also uses ideas and techniques from robust optimization and adversarial training, which have attracted considerable attention in machine learning~\citep{ref:namkoong2016stochastic, ref:gao2018robust, ref:blanchet2019robust,ref:nguyen2019calculating}. Robust optimization for least squares problem with uncertain data was studied in~\citet{ref:Ghaoui1997robust}. Distributionally robust optimization with moment ambiguity sets was proposed in~\citet{ref:delage2010distributionally} and extended in~\citet{ref:goh2010distributionally} and~\citet{ref:kuhn2019wasserstein}. Ambiguity sets prescribed by divergences were previously used to robustify Bayes classification~\cite{ref:nguyen2019optimistic,ref:nguyen2020robust}.

    Our work is also similar to~\citet{ref:chen2016robust} that consider unsupervised domain adaptation regression, and~\citet{ref:wang2020distributionally} that consider robust domain adaption for the classification setting.



\textbf{Notation.} We use $I_d$ to denotes the identity matrix in $\mathbb R^d$. The set of $p$-by-$p$ positive (semi-)definite matrices is denoted by $\PD^p$ ($\PSD^p$). All proofs are relegated to the Appendix.

\section{Problem Statement and Background}
\label{sec:problem}
We consider a generic linear regression setting, in which $X$ is a $d$-dimensional covariate and $Y$ is a univariate response variable. In the context of supervised domain adaptation, we have access to the source domain data $(\wh x_i, \wh y_i)_{i=1}^{N_{\rm S}}$ consisting of $N_{\rm S}$ labelled samples drawn from the source distribution. In addition, we are given a limited number of $N_{\rm T}$ labelled samples $(\wh x_j, \wh y_j)_{j=1}^{N_{\rm T}}$ from the target distribution. Our goal is to predict the responses of the test samples $(x_j, y_j)_{j=1}^J$, which are drawn from the target distribution and arrive sequentially. To this end, we will construct several experts.

In the linear regression setting, each expert is characterized by a vector $\beta \in \R^d$. Given a covariate-response pair $(x, y) \in \R^d \times \R$, we use the square loss function to measure the mismatch between the expert's prediction $\beta^\top x$ and the actual response $y$. Using the target domain data $(\wh x_i, \wh y_i)_{i=1}^{N_{\rm T}}$, one approach is to solve the ridge regression problem
\[
    \Min{\beta \in \R^d}~\frac{1}{N_{\rm T}} \sum_{j=1}^{N_{\rm T}} (\beta^\top \wh x_j -  \wh y_j)^2 + \eta \| \beta\|_2^2
\]
for some $\eta \ge 0$ to obtain the empirical target predictor
\[
    \wh \beta_{\rm T} = \left( \frac{1}{N_{\rm T}} \sum_{j=1}^{N_{\rm T}} \wh x_j \wh x_j^\top + \eta I_d \right)^{-1} \left( \frac{1}{N_{\rm T}} \sum_{j=1}^{N_{\rm T}} \wh x_j \wh y_j \right).
\]
When $N_{\rm T}$ is small, however, the empirical target predictor may perform poorly on the future target data $(x_j, y_j)_{j=1}^J$.

If the source domain distribution is sufficiently close to the target domain distribution, it is expedient to exploit the available information in the source domain data to construct better predictors for the target domain data. With this promise, one can synthesize several predictors to form an ensemble of experts, and one can apply an online aggregation scheme to predict on the unseen target data. We now first describe several interpolation schemes to generate experts.

\textbf{Convex Combination Strategy. }
Denote by $\wh \beta_{\rm S}$ the empirical source predictor, which is obtained by solving the ridge regression problem on the source data. The convex combination strategy generates predictors by forming convex combinations between $\wh \beta_{\rm S}$ and $\wh \beta_{\rm T}$. More precisely, for any $\lambda \in [0, 1]$ a new predictor is synthesized by setting
\[
    \wh \beta_\lambda = \lambda \beta_{\rm S} + (1-\lambda) \beta_{\rm T}. 
\]
The parameter $\lambda$ represents our \textit{belief} in the explanatory power of the source domain data: if $\lambda = 0$, the source domain has no power to explain the target domain, and we recover $\wh \beta_0 = \beta_{\rm T}$, the empirical target predictor. If $\lambda = 1$, the source domain has an absolute predictive power on the target domain, and it is beneficial to use $\wh \beta_1 = \wh \beta_{\rm S}$ because the sample size $N_{\rm S}$ is large. Discretizing $\lambda$ in the range $[0, 1]$ forms a family of experts $\mc E$.

\textbf{Reweighting Strategy.} Reweighting samples is a common strategy in domain adaptation, transfer learning and adversarial training. \citet{ref:garcke2014importance} synthesize experts, for example, by solving
\[
    \Min{\beta \in \R^d}~  \sum_{i=1}^{N_{\rm S}} w_{h,i} (\beta^\top \wh x_i -  \wh y_i)^2 + \sum_{j=1}^{N_{\rm T}} (\beta^\top \wh x_j -  \wh y_j)^2 +  \eta \| \beta\|_2^2
\]
for some non-negative weights $w_{h,i}$ determined via a Gaussian kernel with bandwidth $h > 0$ of the form
\[
    w_{h,i} = \sum_{l=1}^{N_{\rm S}} \alpha_l \exp \left( - \frac{\| \wh x_i - \wh x_l\|_2^2 + (\wh y_i - \wh y_l)^2}{h^2} \right)
\]
for $i = 1, \ldots, N_{\rm S}$. Here, the parameter vector $\alpha \in \R_+^{N_{\rm S}}$ solves the exponential cone optimization problem
\be \notag 
    \begin{array}{cl}
    \max & \!\! \ds \sum_{j=1}^{N_{\rm T}} \log \Big( \sum_{l=1}^{N_{\rm S}} \alpha_l \exp \Big(\!-\!\frac{\| \wh x_j - \wh x_l\|_2^2 + (\wh y_j - \wh y_l)^2}{h^2} \Big) \Big) \\
    \st & \!\!\ds \sum_{i = 1}^{N_{\rm S}} \sum_{l=1}^{N_{\rm S}} \alpha_l \exp \left( - \frac{\| \wh x_i - \wh x_l\|_2^2 + (\wh y_i - \wh y_l)^2}{h^2}  \right)\!=\!N_{\rm S}. 
        \end{array} 
\ee
The predictor $\beta_h$, parametrized by the kernel weight $h$, that solves the reweighted ridge regression problem has the form
\[
 \Big( \sum_{j=1}^{N_{\rm T}} \wh x_j \wh x_j^\top + \sum_{i=1}^{N_{\rm S}} w_i \wh x_i \wh x_i^\top + \eta I_d \Big)^{-1} \Big( \sum_{j=1}^{N_{\rm T}} \wh x_j \wh y_j  + \sum_{i=1}^{N_{\rm S}} w_i \wh x_i \wh y_i\Big).
\]
Discretizing the bandwidth $h$ forms a family of experts $\mc E$. 

\textbf{Bernstein Online Aggregation (BOA). } We now give a brief overview on the BOA algorithm, which is a recursive expert aggregation procedure for sequential prediction~\citep{ref:cesa2006prediction}. For a given set of experts~$\mc E = \{\beta_{1}, \ldots, \beta_{|\mc E|}\}$ and an incumbent weight $\pi_{k, j-1}$ for expert~$k$ at time~$j-1$, this algorithm aggregates the individual expert's predictions linearly based on the arrival of the input data $(x_j, y_j)$ as $\sum_{k=1}^{|\mc E|} \pi_{k,j} \beta_k^\top x_j$. The weights of the experts are updated using the exponential rule
      \[\pi_{k ,j} = \frac{\exp(-\upsilon(1+\upsilon L_{k,j}) L_{k ,j})\pi_{k, j-1}}{\sum_{k=1}^{|\mc E|} \exp(-\upsilon(1+\upsilon L_{kj}) L_{k ,j})\pi_{k, j-1}},\]
where $\upsilon>0$ is the learning rate and $L_{k,j} \!=\! (\beta_k^\top x_j - y_j)^2 \!-\! \sum_{k=1}^{|\mc E|} (\beta_k^\top x_j -\ y_j)^2 \pi_{k, j-1}$. This algorithm is initialized with weights $\pi_{k,0} \geq 0$ satisfying $\sum_{k=1}^{|\mc E|}\pi_{k, 0} = 1$. The cumulative loss for the stream of test data $(x_j, y_j)_{j=1}^J$ is
\be \label{eq:cumulative-loss}
    \sum_{j=1}^J \left( \sum_{k=1}^{|\mc E|} \pi_{k,j} \beta_k^\top x_j - y_j \right)^2.
\ee
For the square loss, the BOA procedure is optimal for the model selection aggregation problem, that is, the excess risk of its batch version achieves the fast rate of convergence $\log(|\mc E|)/J$ in deviation; see~\citet{ref:wintenberger2017optimal}.

\section{Predictor Generation via Distributionally Robust Linear Regression} 
\label{sect:DRLR}

We now specify our framework to generate the set of competitive experts $\mc E$ for future prediction. Our construction is based on the premises that the source domain carries the explanatory power on the target domain to a certain extent and that the scarce target data can provide directional guidance to pull information from the source data. Moreover, we also leverage ideas from distributionally robust optimization and adversarial training, which have been shown to significantly improve the out-of-sample predictive performance~\cite{ref:duchi2018learning,ref:esfahani2018data, ref:blanchet2019robust, ref:gao2020finite, ref:lam2019recovering}. 

With this in mind, our expert generation scheme blends two elements: a distributional probing strategy and a robust estimation procedure. The distributional probing strategy frames the distribution set $\mbb B$, and then each expert is constructed by solving a distributionally robust least squares estimation problem of the form
\be \label{eq:dro}
	\Inf{\beta \in \R^d} \Sup{\QQ \in \mbb B} \EE_{\QQ}[ (\beta^\top X - Y)^2],
\ee
where $\QQ$ is a joint distribution over $(X, Y)$. Generating a collection of distribution sets $\mbb B$ in a systematic manner and solving~\eqref{eq:dro} for each such set will form a family of experts~$\mc E$. 

In a purely data-driven setting with no additional information, it is attractive to probe into the distributional regions \textit{in between} the empirical source distribution ${\Pnom_{\rm S} = N_{\rm S}^{-1} \sum_{i=1}^{N_{\rm S}} \delta_{(\wh x_i, \wh y_i)}}$ and the empirical target distribution $\Pnom_{\rm T} = N_{\rm T}^{-1} \sum_{j=1}^{N_{\rm T}} \delta_{(\wh x_j, \wh y_j)}$. Because probability distributions reside in infinite-dimensional spaces, framing $\mbb B$ in between $\Pnom_{\rm S}$ and $\Pnom_{\rm T}$ is a non-trivial task. Fortunately, because the expected square loss only depends on the first two moments of the joint distribution of $(X, Y)$, it suffices to prescribe $\mbb B$ using a finite parametrization of distributional moments. To this end, let $p = d+1$ represent the dimension of the joint vector $(X, Y)$. For a given set $\mbb U$ on the space of mean vectors and covariance matrices $\R^{p} \times \PSD^{p}$, we consider $\mbb B$ as the lifted distribution set that contains all distributions whose moments belong to $\mbb U$, that is,
\[
    \mbb B =
\left\{
	\QQ \in \mc M(\R^{p}): \QQ \sim (\m, \cov),~ (\m, \cov) \in \mbb U
\right\},
\]
where $\mc M(\R^p)$ denotes the set of all distributions on $\R^p$, and the notation $\QQ \sim (\m, \cov)$ expresses that $\QQ$ has mean $\m$ and covariance matrix $\cov$. It is convenient to construct the moment information set $\mbb U$ using a divergence on $\R^p \times \PSD^p$.

\begin{definition}[Divergence] \label{def:divergence}
    A divergence $\psi$ on $\R^p \times \PSD^p$ satisfies the following properties:
    \begin{itemize}[leftmargin = 3mm, itemsep=0.5mm]
\vspace{-.3cm}
        \item non-negativity:~for any $(\m, \cov)$, $(\msa, \covsa) \in \R^p \times \PSD^p$, we have $\psi((\m, \cov) \parallel (\msa, \covsa) ) \ge 0$,
        \item indiscernability:~$\psi((\m, \cov)\!\parallel\!(\msa, \covsa))\!=\!0$ implies~$(\m, \cov)\!=\!(\msa, \covsa)$.
    \end{itemize}
\end{definition}

In this paper, we will explore two divergences in the space of mean vectors and covariance matrices that are motivated by popular measures of dissimilarity between distributions.
The divergence $\mathds D$ is motivated by the Kullback-Leibler (KL) divergence.
\begin{definition}[Kullback-Leibler-type divergence] \label{def:KL}
    The divergence $\mathds D$ from tuple $(\m, \cov) \in \R^p \times \PD^p$ to tuple $(\msa, \covsa) \in \R^p \times \PD^p$ amounts to 
    \begin{align*}
        &{\mathds D} \big( (\m, \cov) \parallel (\msa, \covsa) \big) \Let \\
        &(\msa - \m)^\top\covsa^{-1} (\msa - \m)\! + \! \Tr{\cov \covsa^{-1}} - \log\det (\cov \covsa^{-1}) \!-\! p.
    \end{align*}
\end{definition}

In fact $\mathds D$ is equivalent to the KL divergence between two non-degenerate Gaussian distributions $\mc N(\m, \cov)$ and $\mc N(\msa, \covsa)$ (up to a factor of~2). As a consequence, $\mathds D$ is non-negative, and it collapses to 0 if and only if $\cov = \covsa$ and $\m = \msa$. We can also show that $\mathds D$ is affine-invariant. However, we emphasize that $\mathds D$ is not symmetric and $\mathds D\big((\m, \cov) \parallel (\msa, \covsa) \big) \neq \mathds D\big((\msa, \covsa) \parallel (\m, \cov) \big)$ in general.

We also study the divergence $\mathds W$ which is motivated by the Wasserstein distance.

\begin{definition}[Wasserstein-type divergence] \label{def:Wasserstein}
	The divergence $\mathds W$ between two tuples $(\m, \cov) \in \R^p \times \PSD^p$ and $(\msa, \covsa) \in \R^p \times \PSD^p$ amounts to 
	\begin{equation*}
	\mathds W \big( (\m, \cov) \! \parallel \! (\msa, \covsa) \big) \!\Let\!  \| \m - \msa \|_2^2 + \Tr{\cov + \covsa - 2 \big( \covsa^{\half} \cov \covsa^{\half} \big)^{\half} } \!.
	\end{equation*}
\end{definition}
The divergence $\mathds W$ coincides with the \textit{squared} type-$2$ Wasserstein distance between two Gaussian distributions $\mc N(\m, \cov)$ and $\mc N(\msa, \covsa)$~\cite{ref:givens1984class}. One can readily show that $\mathds W$ is non-negative, and it vanishes if and only if $(\m, \cov)\!\!=\!\!(\msa, \covsa)$. Thus, $\mathds W$ is a symmetric divergence.

In Sections~\ref{sec:IR} and \ref{sec:SI} we examine in detail two strategies to frame $\mbb U$ and its corresponding distribution set $\mbb B$ in a principled manner, and we devise optimization techniques to solve the resulting robust estimation problems.

\section{``Interpolate, then Robustify'' Strategy}
\label{sec:IR}

``Interpolate, then Robustify'' (IR) is an intuitive strategy to systematically probe into distributional regions between $\Pnom_{\rm S}$ and $\Pnom_{\rm T}$. Let $(\msa_{\rm S}, \covsa_{\rm S})$ be the empirical mean vector and covariance matrix of $\Pnom_{\rm S}$, that is,
\[
    \msa_{\rm S} = \frac{1}{N_{\rm S}} \sum_{i=1}^{N_{\rm S}} \begin{pmatrix}
    \wh x_i \\ \wh y_i
    \end{pmatrix},~\covsa_{\rm S} =\frac{1}{N_{\rm S}} \sum_{i=1}^{N_{\rm S}} \begin{pmatrix}
    \wh x_i \\ \wh y_i
    \end{pmatrix}\begin{pmatrix}
    \wh x_i \\ \wh y_i
    \end{pmatrix}^\top - \msa_{\rm S} \msa_{\rm S}^\top,
\]
and let $(\msa_{\rm T}, \covsa_{\rm T})$ be defined analogously for $\Pnom_{\rm T}$. The IR strategy applies repeatedly the following two steps to generate distribution sets. First, interpolate between $(\msa_{\rm S}, \covsa_{\rm S})$ and $(\msa_{\rm T}, \covsa_{\rm T})$ to obtain a new pair $(\msa_\lambda, \covsa_\lambda)$ parametrized by $\lambda \in [0, 1]$. Second, construct a moment set $\mbb U_{\lambda, \rho}$ as a ball of radius $\rho$ circumscribing the pair $(\msa_\lambda, \covsa_\lambda)$, then lift the moment set $\mbb U_{\lambda, \rho}$ to the corresponding distribution set $\mbb B_{\lambda, \rho}$. More specifically, $(\msa_\lambda, \covsa_\lambda)$ is the $\psi$-barycenter between $(\msa_{\rm S}, \covsa_{\rm S})$ and $(\msa_{\rm T}, \covsa_{\rm T})$, which is obtained by solving
\begin{equation}
\begin{array}{c@{\,}lll}
        \min\limits_{\mu \in \R^p, \Sigma \in \mbb{S}^p_+}& \lambda \mathds \psi((\m, \cov)\! \parallel\! (\msa_{\rm S}, \covsa_{\rm S})) + \\
        &\hspace{1cm} (1\! -\! \lambda) \mathds \psi( (\m, \cov) \! \parallel \! (\msa_{\rm T}, \covsa_{\rm T})).
\end{array}
\label{eq:mean_cov_interpolation}
\end{equation}
\begin{figure}[ht!]
    \centering
    \vspace{-2mm}
    \includegraphics[width=\columnwidth]{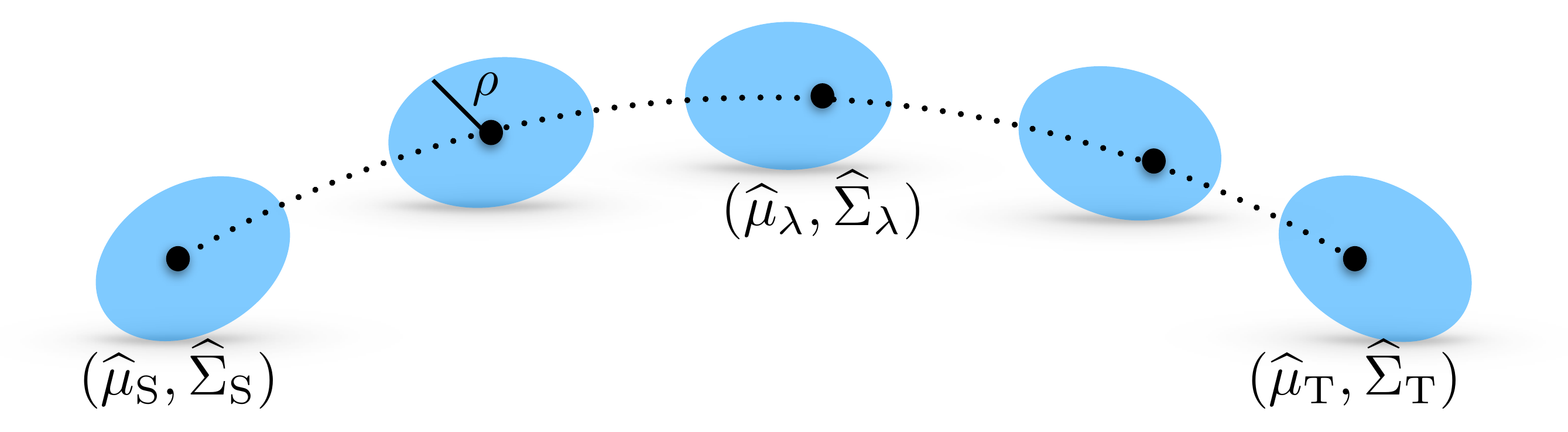} 
    \vspace{-8mm}
    \caption{The dashed curve shows the barycenter interpolations parametrized by $\lambda\in[0,1]$. Ellipses represent $\mbb U_{\lambda, \rho}$ at different $\lambda$.}
    \label{fig:ir_strategy}
\end{figure}

Then, we employ the divergence $\psi$ to construct an uncertainty set $\mbb U_{\lambda, \rho}$ in the mean-covariance matrix space as 
\[
    \mbb U_{\lambda, \rho} \Let \left\{ (\m, \cov)\in \R^p\times \PSD^p: \psi( (\m, \cov)\! \parallel\! (\msa_\lambda, \covsa_\lambda)) \le \rho \right\}.
\]
The outlined procedure is illustrated in Figure~\ref{fig:ir_strategy}. An expert is now obtained by solving the distributionally robust least squares problem~\eqref{eq:dro} with respect to the distribution set
\[\mbb B_{\lambda,\rho} = \{\QQ \in \mc M(\R^p) : \QQ \sim (\mu, \Sigma), (\mu, \Sigma) \in \mbb U_{\lambda, \rho}\}.\]
Notice that in this strategy the parameter $\lambda \in [0, 1]$ characterizes the explanatory power of the source domain to the target domain: if $\lambda = 0$, then $(\msa_\lambda, \covsa_\lambda) = (\msa_{\rm T}, \covsa_{\rm T})$, and if $\lambda = 1$, then $(\msa_\lambda, \covsa_\lambda) = (\msa_{\rm S}, \covsa_{\rm S})$. Thus, as $\lambda$ decreases, $(\msa_\lambda, \covsa_\lambda)$ is moving farther away from the source information $(\msa_{\rm S}, \covsa_{\rm S})$, and $(\msa_\lambda, \covsa_\lambda)$ is pulled towards the target information $(\msa_{\rm T}, \covsa_{\rm T})$. 

The choice of the divergence $\psi$ influences both the barycenter problem~\eqref{eq:mean_cov_interpolation} and the formation of the set $\mbb U_{\lambda, \rho}$. Next, we study the special case of the IR strategy with the KL-type divergence and the Wasserstein-type divergence.

    
    

\subsection{Kullback-Leibler-type Divergence}

The KL-type divergence $\mathds D$ in Definition~\ref{def:KL} is not symmetric. Hence, it is worthwhile to note that the barycenter problem~\eqref{eq:mean_cov_interpolation} optimizes over $(\m, \cov)$ being placed in the first argument of $\mathds D$, and that the set $\mbb U_{\lambda, \rho}$ is also defined with the pair $(\m, \cov)$ being placed in the first argument. Under the divergence $\mathds D$, the barycenter $(\msa_\lambda, \covsa_\lambda)$ admits a closed form expression. This fact is well-known in the field of KL fusion of Gaussian distributions~\cite{ref:battistelli2013consensus}. 
\begin{proposition}[KL barycenter]
\label{prop:kl_interpolation}
Suppose that  $\psi $ is the KL-type divergence.
If $\wh \cov_{\rm S}, \wh \cov_{\rm T} \succ 0$, then $(\msa_\lambda, \covsa_\lambda)$ is the minimizer of the barycenter problem~\eqref{eq:mean_cov_interpolation} with
\begin{align*}
            \wh \Sigma_\lambda &= (\lambda \covsa_{\rm S}^{-1} + (1-\lambda) \covsa_{\rm T}^{-1})^{-1} \succ 0,\\
            \wh \mu_\lambda &= \covsa_\lambda \big( \lambda \covsa_{\rm S}^{-1} \msa_{\rm S} + (1-\lambda) \covsa_{\rm T}^{-1} \msa_{\rm T} \big).
\end{align*}
\end{proposition}


For a given $\lambda \in [0, 1]$ and $\rho \ge 0$, the corresponding IR-KL expert is obtained by solving
    \be  
    \label{opt:KL_expert}
        \Min{\beta \in \R^d}~\left\{f_{\lambda, \rho}(\beta) \Let
        \Sup{\QQ \in \mbb B_{\lambda,\rho}} \EE_\QQ[(\beta^\top X - Y)^2] \right\}.
    \ee
Problem~\eqref{opt:KL_expert} can be efficiently solved using a gradient-descent algorithm. To do this, the next proposition establishes the relevant properties of $f_{\lambda, \rho}$.

\begin{proposition}[Properties of $f_{\lambda, \rho}$] \label{prop:grad_f_D}

    The function $f_{\lambda, \rho}$ is convex and continuously differentiable with
    \begin{align*}\label{eq:grad_f_D}
        \nabla f_{\lambda, \rho} (\beta) \!=\! \frac{2\dualvar\opt \left(  \omega_2  \covsa_\lambda w \!+\! (\dualvar\opt \! - \!\omega_1 ) (\covsa_\lambda \! +\! \msa_\lambda \msa_\lambda^\top) w \right)_{1:d}}{(\dualvar\opt  - \omega_1 )^2}  ,
    \end{align*}
    where 
    $w = [\beta^\top, -1]^\top$, $\omega_1 = w^\top \covsa_\lambda w $, $\omega_2  = (w^\top \msa)^2$ and $\dualvar\opt\in ( \omega_1 , \omega_1  \big(1 + 2\rho + \sqrt{1 + 4 \rho\, \omega_2 } \big)/(2\rho)]$ is the unique solution of the equation
    \begin{equation*}
        \rho = (\dualvar - \omega_1 )^{-2} \omega_1 \omega_2  + (\dualvar - \omega_1 )^{-1} \omega_1  + \log( 1 - \dualvar^{-1} \omega_1 ).
    \end{equation*}
    Furthermore, $f_{\lambda, \rho}$ is locally smooth at any $\beta \in \R^d$, \ie, there exist constants $C_\beta, \epsilon_\beta> 0$ such that for any $\beta'\in\R^d$ with $\| \beta' - \beta \|_2 \le \epsilon_\beta$,
we have $\| \nabla f_{\lambda, \rho} (\beta') - \nabla f_{\lambda, \rho} (\beta) \|_2 \le C_\beta \norm{\beta' - \beta}_2$.
\end{proposition}

Thanks to Proposition~\ref{prop:grad_f_D}, we can apply the adaptive gradient method to solve problem~\eqref{opt:KL_expert} to global optimality, and the algorithm enjoys a sublinear rate $| f_{\lambda, \rho}(\bar{\beta}^k) - f_{\lambda, \rho}(\beta_{\lambda, \rho}\opt ) | \le O(k^{-1})$, where $\bar{\beta}^k$ is a certain average of the iterates, and $\beta_{\lambda, \rho}\opt $ is an optimal solution of~\eqref{opt:KL_expert}. The algorithm and its guarantees are detailed in~\citet{malitsky2019adaptive}.

\begin{figure*}[ht!]
    \centering
    \includegraphics[width=\textwidth]{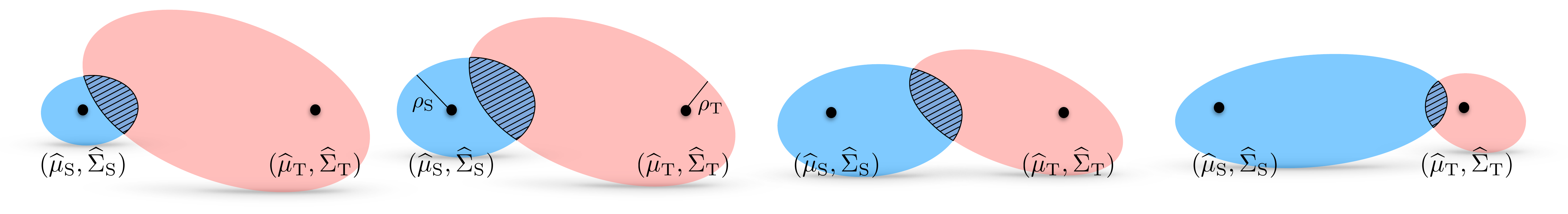}
    \vspace{-8mm}
    \caption{Varying $(\rho_{\rm S}, \rho_{\rm T})$ frames different moment sets $\mbb U_{\rho_{\rm S}, \rho_{\rm T}}$ (hatched regions). The radius $\rho_{\rm S}$ increases from left to right.}
    \label{fig:SI}
    \vspace{-3mm}
\end{figure*}

\subsection{Wasserstein-type Divergence}
Under the divergence $\mathds W$ in Definition~\ref{def:Wasserstein}, problem~\eqref{eq:mean_cov_interpolation} resembles the Wasserstein barycenter in the space of Gaussian distributions. The result from \citet[\S6.2]{agueh2011barycenters} implies that the barycenter~$(\msa_\lambda, \covsa_\lambda)$ admits a closed form expression following the McCann's interpolant~\citep[Example~1.7]{ref:mccann1997convexity}. 



\begin{proposition}[Wasserstein interpolation]
\label{prop:wass_interpolation}
Suppose that $\psi$ is the Wasserstein-type divergence. If $\covsa_{\rm S} \succ 0$, then $(\msa_\lambda, \covsa_\lambda)$ is the minimizer of problem~\eqref{eq:mean_cov_interpolation} with
\begin{align*}
    \wh \m_\lambda &= \lambda \msa_{\rm S} + (1-\lambda) \msa_{\rm T}, \\
    \wh \cov_\lambda &= (\lambda I_p + (1-\lambda) L) \covsa_{\rm S} (\lambda I_p + (1-\lambda) L) ,
\end{align*}
where $L = \covsa_{\rm T}^\half (\covsa_{\rm T}^\half \covsa_{\rm S} \covsa_{\rm T}^\half)^{-\half} \covsa_{\rm T}^\half$.
\end{proposition}
For a given $\lambda \in [0, 1]$ and $\rho \ge 0$, we obtain the corresponding IR-Wasserstein expert by solving a conic program using off-the-shelf solvers such as~\citet{mosek}.
\begin{proposition}[IR-Wasserstein expert] \label{prop:IR_W}
Suppose that $\psi$ is the Wasserstein-type divergence. Problem~\eqref{eq:dro} with $\mbb B \equiv \mbb B_{\lambda, \rho}$ is equivalent to the second order cone program
\[
    \min\limits_{\beta \in \R^d} ~ \left\|(\covsa_\lambda + \msa_\lambda \msa_\lambda^\top)^{\half} \begin{bmatrix}
        \beta \\
        -1
    \end{bmatrix} \right\|_2 + \sqrt{\rho} \left\| \begin{bmatrix}
        \beta \\
        -1
    \end{bmatrix} \right\|_2.
\]
\end{proposition}

\section{``Surround, then Intersect'' Strategy}
\label{sec:SI}

``Surround, then Intersect'' (SI) probes naturally into the distributional space by intersecting two balls centered at the empirical moments. More specifically, this strategy circumscribes $(\msa_{\rm S}, \covsa_{\rm S})$ (respectively, $(\msa_{\rm T}, \covsa_{\rm T})$) with a ball of radius $\rho_{\rm S}$ (respectively, $\rho_{\rm T}$) using the $\psi$-divergence. Consequentially, the moment information set $\mbb U_{\rho_{\rm S}, \rho_{\rm T}}$ in the mean vector-covariance matrix space is defined as
\begin{equation*}
\mbb U_{\rho_{\rm S}, \rho_{\rm T}}
\!\Let \!\left\{
\begin{array}{l}
(\m, \cov)\in \R^p\times \PSD^p \text{ such that: } \\
\psi((\mu, \Sigma) \! \parallel \! (\wh \mu_{\rm S}, \wh \Sigma_{\rm S})) \leq \rho_{\rm S} \!\\
\psi((\mu, \Sigma) \! \parallel \! (\wh \mu_{\rm T}, \wh \Sigma_{\rm T})) \leq \rho_{\rm T}\!\\
\cov + \m \m^\top \succeq \eps I_p
\end{array}  \! \!\right\},
\end{equation*}
where the small constant $\eps > 0$ improves numerical stability. This construction is graphically illustrated in Figure~\ref{fig:SI}.
An expert is now obtained by solving the distributionally robust least squares problem~\eqref{eq:dro} subject to the distributional set
\[
    \mbb B_{\rho_{\rm S}, \rho_{\rm T}} =  \left\{ 
        \QQ \in \mc M(\R^{p}): \QQ \sim (\m, \cov),~ (\m, \cov) \in \mbb U_{\rho_{\rm S}, \rho_{\rm T}}
	\right\}.
\]
Note that $\mbb B_{\rho_{\rm S}, \rho_{\rm T}}$ is well-defined only if the radii $(\rho_{\rm S}, \rho_{\rm T})$ are sufficiently large so that the intersection of the two balls becomes non-empty. A sensible approach to set these parameters is to fix~$\rho_{\rm S}$ and to find a sufficiently large~$\rho_{\rm T}$ so that~$\mbb U_{\rho_{\rm S}, \rho_{\rm T}}$ is non-empty. In this way, the SI strategy characterizes the explanatory power of the source domain to the target domain by the radius $\rho_{\rm S}$: if $\rho_{\rm S} = 0$ then $\mbb U_{\rho_{\rm S}, \rho_{\rm T}}$ becomes a singleton $\{(\msa_{\rm S}, \covsa_{\rm S})\}$, representing the \textit{belief} that the source domain possess absolute explanatory power onto the target domain. As $\rho_{\rm S}$ increases, $\mbb U_{\rho_{\rm S}, \rho_{\rm T}}$ is gradually pulled towards the empirical target moments $(\msa_{\rm T}, \covsa_{\rm T})$. Next, we study the special case of the SI strategy with the KL-type divergence and the Wasserstein-type divergence.

\subsection{Kullback-Leibler-type Divergence}
Recall that~$\mathds D$ is asymmetric and $(\m, \cov)$ is the first argument of $\mathds D$ in the definition of~$\mbb U_{\rho_{\rm S}, \rho_{\rm T}}$. We first study conditions on $\rho_{\rm T}$ under which the ambiguity set~$\mbb B_{\rho_{\rm S}, \rho_{\rm T}}$ is non-empty.
\begin{proposition}[Minimum radius]
\label{prop:minimum_radius_kl}
Suppose that $\psi$ is the KL-type divergence. For any $\rho_S > 0$ the sets $\mbb U_{\rho_{\rm S}, \rho_{\rm T}}$ and $\mbb B_{\rho_{\rm S}, \rho_{\rm T}}$ are non-empty if  $\rho_{\rm T} \ge \mathds{D}((\msa_{\gamma\opt}, \covsa_{\gamma\opt}) \parallel (\msa_{\rm T}, \covsa_{\rm T}))$, where $\gamma\opt$ is a maximizer of
\[
\begin{array}{cl}
\sup & \!\!\mathds{D}((\msa_{\gamma}, \covsa_{\gamma})\!\! \parallel\!\! (\msa_{\rm S}, \covsa_{\rm S}))\!+\! \mathds{D}((\msa_{\gamma}, \covsa_{\gamma}) \!\! \parallel\! \!(\msa_{\rm T}, \covsa_{\rm T}))\!-\!\gamma \rho_{\rm S} \\
\st & \gamma \in \R_+, \covsa_{\gamma} = (1+\gamma) (\gamma \covsa_{\rm S}^{-1} + \covsa_{\rm T}^{-1})^{-1} \in \PSD^p,  \\
& \msa_{\gamma} = \covsa_{\gamma} (\gamma \covsa_{\rm S}^{-1} \msa_{\rm S} + \covsa_{\rm T}^{-1} \msa_{\rm T}) / (1+\gamma) \in \R^p
\end{array}
\]
\end{proposition}
The above optimization problem is effectively  one-dimensional and can therefore be solved by bisection on~$\gamma$. The next theorem asserts that the SI-KL experts are formed by solving a semidefinite program.
\begin{theorem}[SI-KL Expert] \label{thm:ls-kl}
Suppose that $\psi$ is the KL-type divergence and $\mbb B \equiv \mbb B_{\rho_{\rm S}, \rho_{\rm T}}$ is non-empty. Then $\beta\opt = (M_{XX}\opt)^{-1} M_{XY}\opt$ solves problem~\eqref{eq:dro}, where $(M_{XX}\opt, M_{XY}\opt)$ is a solution 
of the convex semidefinite program
    \be \notag
\begin{array}{cl}
    \sup & \tau \\
    \st & M_{XX} \in \R^{d \times d},~ M_{XY} \in \R^{d \times 1},~M_{YY} \in \R\\
    & \tau \in \R_+,~\m \in \R^{p},~M\in \PD^{p}, t \in \R_+\\
    &\msa_k^\top \covsa_k^{-1} \msa_k - 2 \msa_k^\top \covsa_k^{-1} \m+\Tr{M\covsa_k^{-1}} -\\
    &\log\det (M\covsa_k^{-1}) \!-\!
    \log(1\!-\! t) - p \!\leq\! \rho_k \,~ \forall k \! \in \! \{\rm{S}, \rm{T}\} \\[1ex]
	&\begin{bmatrix} M & \m \\ \m^\top & t \end{bmatrix} \succeq 0,~\begin{bmatrix}
	    M_{XX} & M_{XY} \\ M_{XY}^\top & M_{YY}-\tau
	\end{bmatrix} \succeq 0\\
	&M = \begin{bmatrix}
	    M_{XX} & M_{XY} \\ M_{XY}^\top & M_{YY}
	\end{bmatrix} \succeq \eps I_p.
\end{array}
\ee
\end{theorem}

\subsection{Wasserstein-type Divergence}
The space $\R^p\times \PSD^p$ can be endowed with a distance inherited from the Wasserstein distance between Gaussian distribution. For any $\rho_{\rm S} > 0$, the minimum radius for~$\rho_{\rm T}$ that makes~$\mbb B_{\rho_{\rm S},\rho_{ \rm T}}$ non-empty is known in closed form.
\begin{proposition}[Minimum radius]
\label{prop:minimum_radius}
Suppose that $\psi$ is the Wasserstein-type divergence. For any $\rho_{\rm S} >0$ the sets $\mbb U_{\rho_{\rm S}, \rho_{\rm T}}$ and $\mbb B_{\rho_{\rm S}, \rho_{\rm T}}$ are non-empty if
\[\rho_{\rm T}\geq \left(\sqrt{\mathds  W((\msa_{\rm S}, \covsa_{\rm S}) \parallel (\msa_{\rm T}, \covsa_{\rm T}))} - \sqrt{\rho_{\rm S}}\right)^{2}.\]
\end{proposition}
The next theorem asserts that the SI-Wasserstein experts are constructed by solving a semidefinite program.
\begin{theorem}[SI-Wasserstein expert] \label{thm:ls-w}
Suppose that $\psi$ is the Wasserstein-type divergence and $\mbb B \equiv \mbb B_{\rho_{\rm S}, \rho_{\rm T}}$ is non-empty. Then $\beta^\star \!=\! (M\opt_{XX})^{-1} M_{XY}\opt$ solves problem~\eqref{eq:dro}, where $(M\opt_{XX}, M\opt_{XY})$ is a solution 
of the linear semidefinite program
\begin{equation} \notag
\!\!\!\begin{array}{cl}
     \sup &\!\tau  \\
     \st & M_{XX} \in \R^{d \times d}, M_{XY}\in \R^{d\times 1}, M_{YY} \in \R\\
     &\tau \in \R_+, \mu \in \R^p, M, H \in \PSD^{p}, C_{\rm S}, C_{\rm T} \in \R^{p \times p} \\
     &\hspace{-.2cm}\left.\begin{array}{l}
          \| \wh \mu_k\|_2^2 \!-\! 2 \msa_k^\top \m \! +\! \Tr{\! M \!+\!\wh \Sigma_k  \! -\! 2 C_k} \! \leq\! \rho_k   \\
          \begin{bmatrix}
         H &C_k \\
         C_k^\top &\wh \Sigma_k
     \end{bmatrix} \succeq 0
     \end{array}\!\!\!\!\right\}\!k \! \in \!\{\rm S, \rm T\}\\ [1.5ex]
     &\begin{bmatrix}
         M-H &\mu \\
         \mu^\top &
     \end{bmatrix} \succeq 0\\  [1.5ex]
     &\begin{bmatrix}
         M_{XX} &M_{XY} \\
         M_{XY}^\top &M_{YY}\!\!
        -\!\tau
     \end{bmatrix}\! \!\succeq \! 0,\,M\!\!=\!\!
     \begin{bmatrix}
         M_{XX} &M_{XY}\\
         M_{XY}^\top &M_{YY}\end{bmatrix} \!\!\succeq\!\eps I_{\!p}.
\end{array}
\end{equation}

\end{theorem}

\section{Numerical Experiments}
\label{sec:numerical}

\begin{table*}[h!]
    \centering
    \scriptsize
    \begin{tabular}{||c|c|c|c|c|c|c|c|c|c|c|c|c|c||}
    \hline
    \hline
    Data Set&Time &IR-KL &IR-WASS &SI-KL &SI-WASS & CC-L &CC-TL &CC-SL & CC-TE &CC-SE &RWS &LSE-T &LSE-T$\&$S \\
    \hline
    \hline
    \multirow{4}{*}{Uber$\&$Lyft} 
    &5 &17.65	&\textbf{1.00}	&199.28 &1.01 &34.04 &98.43 &12.03 &155.71 &1.74 &1.45 &119.65 &11.08\\
    &10 &13.67 &\textbf{1.00} &111.52 &1.01 &30.85 &99.22 & 11.40 &161.72 &1.58 &1.34 &137.15 &6.32 \\
    &50 &13.39 &\textbf{1.00}	&60.29 &1.01 &25.87 &	85.06 &9.72 &147.45 &1.42 &1.16 &57.85 &2.12\\
    &100 &15.24 &\textbf{1.00}	&59.06 &1.01 &26.01 &	85.77 &9.91 &148.49 &1.41 &1.12 &31.25 &1.57 \\
    \hline
    \hline
    \multirow{4}{*}{$\substack{\text{US}\\ \text{Births (2018)}}$}
    &5 &79.83 &1.02 &44.71 &\textbf{1.00}	&64.99 &	257.60 &25.13 &432.09 &2.07 &4.50 &727.88 &39.17 \\
    &10 & 115.47 &1.02 &39.35 &\textbf{1.00} &45.59 &195.14 &18.33 &339.11 &1.60 &3.29 &524.39 &19.28\\
    &50 &107.40 &1.01 &40.04 &\textbf{1.00}	 &42.74 &	192.46 &13.12 &361.51 &1.31 &2.00 &191.27 &5.20\\
    &100 &117.03 &1.01 &53.13 &\textbf{1.00} &45.35 &	208.65 &12.94 &397.33 &1.22 &1.75 &104.75 &3.19\\
    \hline
    \hline
    \multirow{4}{*}{$\substack{\text{Life}\\ \text{Expectancy}}$}  
    &5 &33.18 &\textbf{1.00} &6.24 &1.03 &17.24 &77.06 &7.38 &125.71 &1.46 &1.15 &255.08 &20.72\\
    &10 &25.59 &\textbf{1.00}	&5.45 &1.02 &12.49 &60.19 &5.50 &104.00 &1.40 &1.15 &167.15 &10.73\\
    &50 &19.81 &\textbf{1.00}	&8.70 &1.01 &7.57 &44.00 &3.10 &84.98 &1.38 &1.10 &39.83 &3.15\\
    &100 &19.02 &\textbf{1.00}	&8.25 &1.005 &6.82 &41.40 &2.68 &83.60 &1.38 &1.08 &20.42 &2.10  \\
    \hline
    \hline
    \multirow{4}{*}{$\substack{\text{House}\\ \text{Prices in KC}}$}  
    &5 &1.58 &\textbf{1.00}	&1.21 &1.01 &3.98 &8.87 &2.12 &13.31 &1.29 &1.23 &11.75 &3.70\\
    &10 &1.52 &\textbf{1.00}	&1.20 &1.01 &3.58 &7.77 &2.02 &11.70 &1.27 &1.23 &6.93 &2.25\\
    &50 &1.34 &\textbf{1.00} &1.31 &1.01 &2.79 &6.52 &1.86 &10.37 &1.27 &1.20 &3.91 &1.30\\
    &100 &1.34 &\textbf{1.00}	&1.30 &1.01 &2.65 &6.54 &1.91 &10.74 &1.27 &1.18 &2.72 &1.12\\
    \hline
    \hline
    \multirow{4}{*}{$\substack{\text{California}\\ \text{Housing}}$}  
    &5 &63.33 &1.05 &3.31 &\textbf{1.00} &27.63 &102.82 &9.60 &181.52 &1.35 &1.17 &96.43 &54.34\\
    &10 &68.08 &1.04 &2.42 &\textbf{1.00}	&20.57 &91.86 &6.23 &169.87 &1.19 &1.17 &45.64 &24.76\\
    &50 &70.08 &1.01	&1.97	&\textbf{1.00}	&11.79 &	81.72 &2.49 &170.18 &1.05 &1.13 &10.17 &5.63\\
    &100 & 72.80 &1.003 &1.90 &\textbf{1.00}	&9.71 &79.19 &1.83 &173.96 &1.04 &1.14 &5.81 &3.39\\
    \hline
    \hline
    \end{tabular} 
    \vspace{-.2cm}
    \caption{Normalized cumulative loss values averaged over~100 independent runs. }
    \vspace{-4mm}
    \label{tab:cum_loss_table}
\end{table*}

The second-order cone and semidefinite programs are modelled in MATLAB via YALMIP~\citep{ref:lofberg2004yalmip} and solved with~\citet{mosek}. All experiments are run on an~Intel i7-8700 CPU (3.2 GHz) computer with 16GB RAM. The corresponding codes are available at~\url{https://github.com/RAO-EPFL/DR-DA.git}.

We now aim to assess the performance of experts and demonstrate the effects of robustness.
In all experiments we generate the set $\mc E = \{\beta_1, \ldots, \beta_{|\mc E|}\}$ of experts with $|\mc E| = 10$. 

We consider four family of robust experts generated by:
\begin{itemize}[leftmargin =3mm, itemsep=0.5mm]
\vspace{-.3cm}
    \item IR-KL: with $\rho\!=\!\mathds D((\msa_{\rm T}, \covsa_{\rm T})\! \parallel\! (\msa_{\rm S}, \covsa_{\rm S}) ) / (3|\mc E|)$ and $\lambda$ is spaced from~1 to~0 in exponentially increasing steps.\footnote{We say that~$\lambda$ is spaced from~$a$ to~$b$ in~$K$ exponentially increasing steps if~$\lambda_1 = a$ and~$\lambda_{k+1} = \lambda_{k} - (a  - b) \exp(k)/ \sum_{i=1}^{K-1} \exp(i)$ for all $k\in\{2,\ldots, K-1\}$.}
    
    \item IR-WASS: with~$\rho\!\!=\!\!\mathds W((\msa_{\rm T}, \covsa_{\rm T}) \!\!\parallel\!\! (\msa_{\rm S}, \covsa_{\rm S}) ) / (3|\mc E|)$ and~$\lambda$ is spaced from~1 to~0 in exponentially increasing steps.
    
    \item SI-KL: with~$\rho_{\rm S}$ spaced from~$10^{-3}$ to~$\mathds D((\msa_{\rm T}, \covsa_{\rm T})~\!\!\parallel\!\!~(\msa_{\rm S}, \covsa_{\rm S}) )\!-\!1$ in exponentially increasing steps.
    For a given $\rho_{\rm S}$, ~$\rho_{\rm T} $ is set to the sum of the minimum target radius satisfying the condition of~Proposition~\ref{prop:minimum_radius_kl} and $\rho_{\rm S} / 2$.\footnote{If $d\geq 15$, then the minimum value of~$\rho_{\rm S}$ is set to~5 to improve numerical stability.}
    \item SI-WASS: with $\rho_S$ spaced from~$10^{-4}$ to~$\mathds W((\msa_{\rm T}, \covsa_{\rm T}) \parallel (\msa_{\rm S}, \covsa_{\rm S}) )$ in increasing exponential steps. For a given $\rho_{\rm S}$, $\rho_{\rm T} $ is set to the sum of the minimum radius that satisfies the condition in~Proposition~\ref{prop:minimum_radius} and~$\rho_{\rm S} / 2$.
    \end{itemize}
    \vspace{-.3cm}
    We benchmark against the Convex Combination~(CC) and Reweighting~(RW) experts in Section~\ref{sec:problem} generated by 
    \begin{itemize}[leftmargin =4mm, itemsep=0.5mm]
    \vspace{-.3cm}
        \item  CC-L:~with~$\lambda$ equally spaced in~$[0, 1]$, thus provides uniformly spaced distributional regions in between domains.
        \item CC-TL: with~$\lambda$ equally spaced in~$[0, 0.5]$, thus distributional regions are formed around the target domain.
        \item CC-SL: with~$\lambda$ equally spaced in~$[0.5, 1]$, thus distributional regions are formed around the source domain.
        \item CC-TE: with $\lambda$ spaced from~0 to~1 in exponentially increasing steps, thus the constructed distributional regions are concentrated towards the target domain. 
        \item CC-SE: with~$\lambda$ spaced from~1 to~0 in exponentially increasing steps, thus the constructed distributional regions are concentrated towards the source domain. 
        \item RWS: with~$h$ equally spaced in~$[0.5, 10]$.
    \end{itemize}
    \vspace{-.3cm}
    
    We consider a family of sequential empirical ridge regression estimators generated by training for each~$J$ over
    \begin{itemize}[leftmargin =4mm, itemsep=0.5mm]
    \vspace{-.3cm}
        \item LSE-T, the union of the target dataset~$(\wh x_j, \wh y_j)_{j=1}^{N_{\rm T}}$, and the sequentially arriving target test data~$(x_j, y_j)_{j=1}^{J-1}$,
        \item LSE-T$\&$S, the union of the source data~$(\wh x_i, \wh y_i)_{i=1}^{N_{\rm S}}$, the target data~$(\wh x_j, \wh y_j)_{j=1}^{N_{\rm T}}$ and the sequentially arriving target test data~$(x_j, y_j)_{j=1}^{J-1}$.
    \end{itemize} 
\vspace{-.2cm}
 Note that both LSE-T and LSE-T$\&$S predictors dynamically incorporate the new data to adapt the prediction. Thereby, they have an unfair advantage  in the long run over the other experts that are trained only once at the beginning with $N_{\rm T}$ samples from the test domain. 
 
 The main reason behind using exponential step sizes originates from the asymmetric nature of~$\mathds D$. For simplicity, we also use it for experts with~$\mathds W$.
To ensure fairness in the competition between experts, we vary the parameters of the non-robust experts also in exponential steps.

We compare the performance of our model against the above non-robust benchmarks on~5 Kaggle datasets:\footnote{Descriptions and download links are provided in the appendix.}
    \begin{itemize}[leftmargin = 3mm, itemsep=0.5mm]
    \vspace{-0.3cm}
          \item \textbf{Uber$\&$Lyft} contains~$d\!=\!38$ features of Uber and Lyft cab rides in Boston including the distances, date and time of the hailing, a weather summary for that day. The prediction target is the price of the ride. We divide the dataset based on the company, Uber~(source) and Lyft~(target). 
         
        \item \textbf{US Births~(2018)} has~$d=36$ predictive features of child births in the United States in the year of~2018 including the gender of the infant, mother's weight gain, and mother's per-pregnancy body mass index. The task is to predict the weight of the infants. We divide the dataset based on gender: male (source) and female (target).
        
       \item \textbf{Life Expectancy} contains $d = 19$ predictive features, and the target variable is the life expectancy at birth. The dataset is divided into  
       two subgroups: developing (source) and developed (target) countries.
        
        \item \textbf{House Prices in King Country} contains $d \!=\! 14$ predictive variables, the target variable is the transaction price of the houses. We split the dataset into two domains: houses built in~$[1950, 2000)$~(source) and~$[2000, 2010]$~(target).
         \item \textbf{California Housing Prices} has~$d = 9$ predictive features, the target variable is the price of houses. We divide this dataset into houses with less than an hour drive to the ocean shore~(source) and houses in inland~(target).
    \end{itemize}
    \vspace{-.3cm}
    We use all samples from the source domain for training, and we form the target training set by drawing $N_{\rm T}\!=\!d$ samples from the target dataset.
    Later, we randomly sample~$J\!=\!1000$ data points from the remaining target samples to form the sequentially arriving target test samples. 
    Note that the performance of the experts is sensitive to the data, and thus we replicate this procedure~100 times.
    We set the regularization parameter of the ridge regression problem to $\eta = 10^{-6}$ and the learning rate of the BOA algorithm to $\upsilon = 0.5$.
    We measure the performance of the experts by the cumulative loss~\eqref{eq:cumulative-loss} calculated for every~$J$.
    
    Table~\ref{tab:cum_loss_table} shows the average cumulative loss of each aggregated expert obtained by the BOA algorithm for all datasets and for~$J\! =\! \{5, 10, 50, 100\}$ across~100 independent runs.
    In each row, the minimum loss is normalized to 1, and the remaining entries are presented by the multiplicative factor of the minimum value.
    This result suggests that the IR-WASS and SI-WASS experts perform favorably over the competitors in that their cumulative loss at each time step is substantially lower than that of most other competitors.
    \begin{figure}[h!]
    \centering
    \includegraphics[width=.9\columnwidth]{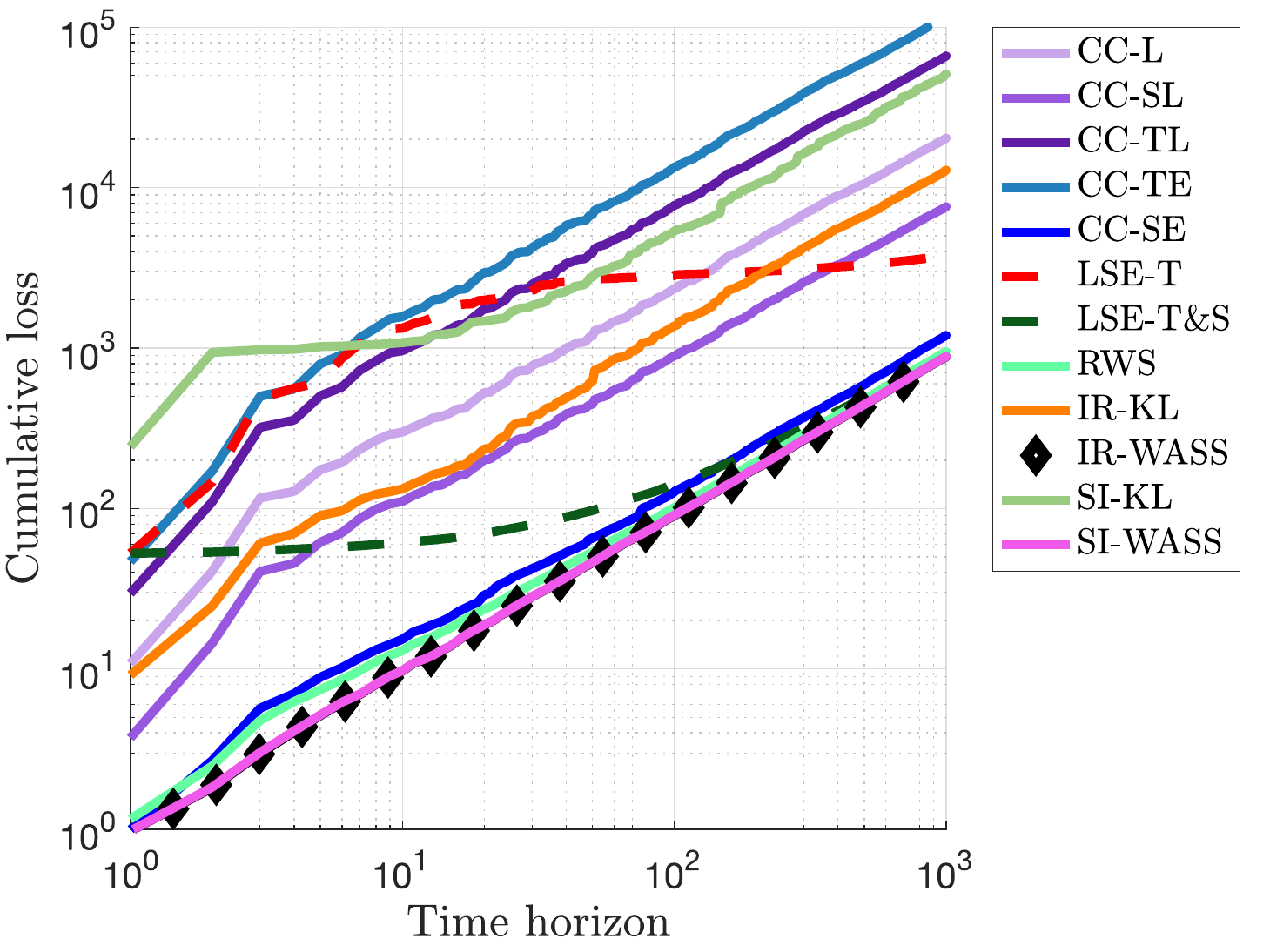}
    \vspace{-.2cm}
    \caption{Cumulative loss averaged over~100 runs, Uber$\&$Lyft.}
    \label{fig:cumloss_plot}
    \end{figure}
    
    Figure~\ref{fig:cumloss_plot} demonstrates how the average cumulative loss in~\eqref{eq:cumulative-loss} grows over time for the Uber$\&$Lyft dataset. 
    Figure~\ref{fig:cumloss_plot} shows that the loss of LSE-T$\&$S is initially constant at a high level, which highlights the discrepancy between the two domain distributions.
    The growth rate of LSE-T decays faster than that of other experts, and the time when LSE-T saturates indicates when the combined target domain data alone is sufficient to construct a single, competitive predictor without using any source domain data.

    \textbf{Concluding Remarks.}
    The theoretical and experimental results in this paper suggest that IR-WASS and SI-WASS are attractive schemes to generate a family of robust least squares experts. Moreover, the IR-WASS and SI-WASS experts are extremely easy to compute because it requires solving only a second-order cone or a linear semidefinite program. We observe that KL-type divergence schemes are less numerically stable due to the computation of the log-determinant and the inverse of a nearly singular covariance matrix $\covsa_{\rm T}$. Setting the parameters for KL-type divergence schemes is also harder due to the asymmetry of the divergence $\mathds D$. While this paper focuses solely on \textit{interpolating} schemes, it would also be interesting to explore \textit{extrapolating} schemes in future research.

    \section*{Acknowledgments}
    Material in this paper is based upon work supported by the Air Force Office of Scientific Research under award number FA9550-20-1-0397. Additional support is gratefully acknowledged from NSF grants 1915967, 1820942, 1838676, and also from the China Merchant Bank. Man-Chung Yue gratefully acknowledges the support by HKRGC under the Early Career Scheme Funding 25302420.
        
\bibliography{bibliography}
\bibliographystyle{icml2021}
\appendix
	\renewcommand\thesection{\Alph{section}}
	\renewcommand{\theequation}{A.\arabic{equation}}
	\renewcommand{\thefigure}{A.\arabic{figure}}
	\renewcommand{\thetable}{A.\arabic{table}}
\onecolumn
\section{Appendix}
\subsection{Proof of Section~\ref{sec:IR}}
\begin{proof}[Proof of Proposition~\ref{prop:kl_interpolation}]
Note that optimization problem~\eqref{eq:mean_cov_interpolation} constitutes an unbounded convex optimization problem when $\psi$ is the Kullback-Leibler-type divergence of Definition~\ref{def:divergence}.
Let $g(\mu, \Sigma) \Let \lambda \mathds{D}((\mu, \Sigma) \parallel (\msa_{\rm S}, \covsa_{\rm S})) + (1-\lambda) \mathds{D}((\mu, \Sigma) \parallel (\msa_{\rm T}, \covsa_{\rm T}))$,
then, the first order optimality condition reads
\begin{align*}
\nabla_{\mu}g(\mu, \Sigma) &= 2\lambda\covsa^{-1}_{\rm S} (\mu-\msa_{\rm S}) + 2(1-\lambda)\covsa^{-1}_{\rm T} (\mu-\msa_{\rm T}) = 0,\\
\nabla_{\Sigma}g(\mu, \Sigma) &= \lambda \covsa^{-1}_{\rm S} - \lambda \Sigma^{-1} + (1-\lambda) \covsa^{-1}_{\rm T} -( 1 - \lambda) \Sigma^{-1} = 0.
\end{align*}
One can then show $(\msa_\lambda, \covsa_\lambda)$ provided in statement of Proposition~\ref{prop:kl_interpolation} solves the system of equalities above.
\end{proof}

Below we prove Proposition~\ref{prop:grad_f_D}. In the proof of Proposition~\ref{prop:grad_f_D} and its auxiliary lemmas, Lemma~\ref{lemma:dual-KL} and Lemma~\ref{lemma:extreme-KL}, we omit the subscripts $\lambda$ and $\rho$ to avoid clutter.



\begin{lemma}[Dual problem] \label{lemma:dual-KL}
    Fix $(\msa, \covsa) \in \R^p \times \PD^p$ and $\rho \ge 0$. For any symmetric matrix $H \in \Sym^p$, the optimization problem
    \begin{subequations}
    \be \label{eq:KL-subproblem2}
    \left\{
	\begin{array}{cl}
	\Sup{\m, \cov } & \Tr{H (\cov+\m\m^\top)} \\
	\st & \Tr{\cov \covsa^{-1}} - \log\det (\cov \covsa^{-1}) - p + (\m - \msa)^\top \covsa^{-1} (\m - \msa) \leq \rho, \\
	& \cov \succ 0
	\end{array}
	\right.
	\ee
	admits the dual formulation
	\be \label{eq:KL-subproblem1}
	\left\{
	\begin{array}{cl}
	\inf &\dualvar (\rho - \msa^\top \covsa^{-1} \msa ) + \dualvar^2  \msa^\top \covsa^{-1} [\dualvar \covsa^{-1} - H]^{-1} \covsa^{-1} \msa  - \dualvar \log \det (I - \covsa^{\half} H \covsa^\half / \dualvar) \\
	\st & \dualvar \ge 0, \; \dualvar \covsa^{-1} \succ H.
	\end{array}
	\right.
	\ee
    \end{subequations}
\end{lemma}
\begin{proof}[Proof of Lemma~\ref{lemma:dual-KL}]
    For any $\m \in \R^p$ such that $(\m - \msa)^\top \covsa^{-1} (\m - \msa) \le \rho$, denote the set $\mc S_\m$ as
	\[
	\mc S_\m \Let \left\{
	\cov \in \PD^p : \Tr{\cov \covsa^{-1}} - \log\det \cov\leq \rho_\mu
	\right\},
	\]
	where $\rho_\m \in \R$ is defined as $\rho_{\m} \Let \rho + p - \log\det\covsa - (\m - \msa)^\top \covsa^{-1}(\m - \msa)$. Using these auxiliary notations, problem~\eqref{eq:KL-subproblem2} can be re-expressed as a nested program of the form
	\[
	\begin{array}{cl}
	\Sup{\m} & \m^\top H \m + \Sup{\cov \in \mc S_\m} ~ \Tr{H \cov} \\
	\st & (\m - \msa)^\top \covsa^{-1} (\m - \msa) \leq \rho,
	\end{array}
	\]
	where we emphasize that the constraint on $\m$ is redundant, but it is added to ensure the feasibility of the inner supremum over $\cov$ for every feasible value of $\m$ of the outer problem. We now proceed to reformulate the supremum subproblem over $\cov$.
	
	Assume momentarily that $H \neq 0$ and that $\m$ satisfies $(\m - \msa)^\top \covsa^{-1} (\m - \msa) < \rho$. In this case, one can verify that $\covsa$ is a Slater point of the convex set $\mc S_\m$. Using a duality argument, we find
	\begin{align*}
	\Sup{\cov \in \mc S_\m} ~ \Tr{H \cov} =& \Sup{\cov \succ 0} \Inf{\phi \ge 0} ~\Tr{H \cov} + \phi \big(\rho_\m - \Tr{\covsa^{-1} \cov} + \log\det \cov \big) \notag\\
	=& \Inf{\phi \ge 0} ~\left\{ \phi \rho_\m + \Sup{\cov \succ 0}~ \big\{ \Tr{(H - \phi \covsa^{-1})\cov}  + \phi \log \det \cov \big\} \right\},  \notag
	\end{align*}
	where the last equality follows from strong duality~\citep[Proposition~5.3.1]{ref:bertsekas2009convex}. 
	If $H - \phi \covsa^{-1} \not\prec 0$, then the inner supremum problem becomes unbounded. To see this, let $\sigma \in \R_+$ be the maximum eigenvalue of $H - \phi \covsa^{-1}$ with the corresponding eigenvector $v$, then the sequence $(\Sigma_k)_{k\in \mbb N}$ with $\Sigma_k = I + k vv^\top$ attains the asymptotic maximum objective value of $+\infty$. If $H - \phi \covsa^{-1} \prec 0$  then the inner supremum problem admits the unique optimal solution
	\be \label{eq:unique-cov}
	\cov\opt(\phi) = \phi (\phi \covsa^{-1} - H)^{-1},
	\ee
	which is obtained by solving the first-order optimality condition. By placing this optimal solution into the objective function and arranging terms, we have
	\be \label{eq:support-inner}
	\Sup{\cov \in \mc S_\m} ~ \Tr{H \cov} = \Inf{\substack{\phi \ge 0 \\ \phi \covsa^{-1} \succ H }}~ \phi \big( \rho - (\m - \msa)^\top \covsa^{-1} (\m - \msa) \big) - \phi \log \det (I - \covsa^\half H \covsa^\half /\phi).
	\ee
	We now argue that the above equality also holds when $\m$ is chosen such that $(\m - \msa)^\top \covsa^{-1} (\m - \msa) = \rho$. In this case, $\mc S_\m$ collapses into a singleton $\{\covsa\}$, and the left-hand side supremum problem attains the value $\Tr{H\covsa}$. The right-hand side infimum problem becomes
	\[
	    \Inf{\substack{\phi \ge 0 \\ \phi \covsa^{-1} \succ H }}~ - \phi \log \det (I - \covsa^\half H \covsa^\half /\phi).
	\]
	One can show using the l'Hopital rule that
	\[
	    \lim_{\phi \uparrow +\infty}~- \phi \log \det (I - \covsa^\half H \covsa^\half /\phi) = \Tr{H\covsa},
	\]
	which implies that the equality holds. Furthermore, when $H = 0$, the left-hand side of~\eqref{eq:support-inner} evaluates to 0, while the infimum problem on the right-hand side of~\eqref{eq:support-inner} also attains the optimal value of 0 asymptotically as $\phi$ decreases to 0. This implies that~\eqref{eq:support-inner} holds for all $H \in \mathbb{S}^p$ and for any $\m$ satisfying $(\m - \msa)^\top \covsa^{-1} (\m - \msa) \le \rho$. 
	
	The above line of argument shows that problem~\eqref{eq:KL-subproblem2} can now be expressed as the following maximin problem
	\[
	    \Sup{\m: (\m - \msa)^\top \covsa^{-1} (\m - \msa) \leq \rho} ~ \Inf{\substack{\phi \ge 0 \\ \phi \covsa^{-1} \succ H }}~ \m^\top H \m + \phi \big( \rho - (\m - \msa)^\top \covsa^{-1} (\m - \msa) \big) - \phi \log \det (I - \covsa^\half H \covsa^\half /\phi).
	\]
	For any $\phi\ge 0$ such that $\phi \covsa^{-1} \succ H$, the objective function is concave in $\m$. For any $\m$, the objective function is convex in $\phi$. Furthermore, the feasible set of $\mu$ is convex and compact, and the feasible set of $\phi$ is convex. As a consequence, we can apply Sion's minimax theorem~\cite{ref:sion1958minimax} to interchange the supremum and the infimum operators, and problem~\eqref{eq:KL-subproblem2} is equivalent to
	\[
	\Inf{\substack{\phi \ge 0 \\ \phi \covsa^{-1} \succ H }}~\left\{
	\begin{array}{l}
	\phi \rho - \phi \log \det  (I - \covsa^\half H \covsa^\half /\phi)  \\
	\hspace{2cm} + \Sup{\m:(\m - \msa)^\top \covsa^{-1} (\m - \msa) \leq \rho} ~ \m^\top H \m  - \phi (\m - \msa)^\top \covsa^{-1} (\m - \msa)
	\end{array}
	\right\}.
	\]
	For any $\phi$ which is feasible for the outer problem, the inner supremum problem is a convex quadratic optimization problem because $ \phi \covsa^{-1} \succ H$. Using a strong duality argument, the value of the inner supremum equals to the value of
	\begin{align*}
	    &\Inf{\nu \ge 0} ~ \left\{ \nu \rho - (\nu + \phi) \msa^\top \covsa^{-1} \msa + \Sup{\m}~ \m^\top (H - (\phi + \nu) \covsa^{-1}) \m + 2 (\nu + \phi) (\covsa^{-1} \msa)^\top \m \right\}\\
    =& \Inf{\nu \ge 0} ~ \nu \rho - (\nu + \phi) \msa^\top \covsa^{-1} \msa + (\nu + \phi)^2 (\covsa^{-1} \msa)^\top [(\phi + \nu) \covsa^{-1} - H]^{-1}  (\covsa^{-1} \msa ),
	\end{align*}
	where the equality follows from the fact that the unique optimal solution in the variable $\m$ is given by
	\be \label{eq:unique-mu}
	    (\phi + \nu) [ (\phi + \nu)\covsa^{-1} - H]^{-1}\covsa^{-1}\msa.
	\ee
	By combining two layers of infimum problem and using a change of variables $\dualvar \leftarrow \phi + \nu$,  problem~\eqref{eq:KL-subproblem2} can now be written as
	\be \label{eq:KL-subproblem3}
	\left\{
	\begin{array}{cl}
	\inf & \dualvar (\rho - \msa^\top \covsa^{-1} \msa ) +  \dualvar^2  \msa^\top \covsa^{-1} [\dualvar \covsa^{-1} - H]^{-1} \covsa^{-1} \msa  - \phi \log \det (I - \covsa^\half H \covsa^\half /\phi) \\
	\st & \phi \ge 0, \; \phi \covsa^{-1} \succ H, \; \dualvar - \phi \ge 0.
	\end{array}
	\right.
	\ee
	We now proceed to eliminate the multiplier $\phi$ from the above problem. To this end, rewrite the above optimization problem as
	\[
	\begin{array}{cl}
	\inf &\dualvar (\rho - \msa^\top \covsa^{-1} \msa ) + \dualvar^2  \msa^\top \covsa^{-1} [\dualvar \covsa^{-1} - H]^{-1} \covsa^{-1} \msa + g(\dualvar)\\
	\st & \dualvar \ge 0, \; \dualvar \covsa^{-1} \succ H,
	\end{array}
	\]
	where $g(\dualvar)$ is defined for every feasible value of $\dualvar$ as
	\be \label{eq:f-def}
	g(\dualvar) \Let \left\{
	\begin{array}{cl}
	\inf & - \phi \log \det (I - \covsa^{\half} H \covsa^\half / \phi) \\
	\st & \phi \ge 0, \; \phi \covsa^{-1} \succ H, \; \phi \le \dualvar.
	\end{array}
	\right.
	\ee
	Let $g_0 (\phi)$ denote the objective function of the above optimization, which is independent of $\dualvar$. Let $\sigma_1, \ldots, \sigma_p$ be the eigenvalues of $\covsa^\half H \covsa^\half$, we can write the function $g$ directly using the eigenvalues $\sigma_1, \ldots, \sigma_p$ as
	\[
	g_0(\phi) = -\phi \sum_{i = 1}^p \log (1 - \sigma_i/\phi).
	\]
	It is easy to verify by basic algebra manipulation that the gradient of $g_0$ satisfies
	\[
	\nabla g_0 (\phi) = \sum_{i=1}^p \left[ \log\left( \frac{\phi}{\phi - \sigma_i} \right) - \frac{\phi}{\phi - \sigma_i} \right] + p \leq 0,
	\]
	which implies that the value of $\phi$ that solves~\eqref{eq:f-def} is $\dualvar$, and thus $g (\dualvar) = - \dualvar \log \det (I - \covsa^{\half} H \covsa^\half / \dualvar)$. Substituting $\phi$ by $\dualvar$ in problem~\eqref{eq:KL-subproblem3} leads to the desired claim.
\end{proof}

\begin{lemma}[Optimal solution attaining $f(\beta)$] \label{lemma:extreme-KL}
    For any $(\msa, \covsa) \in \R^p \times \PD^p$, $\rho \in \R_{++}$ and $w \in \R^p$, $f(\beta)$ equals to the optimal value of the optimization problem
    \begin{subequations}
    \be \label{eq:KL-subproblem4}
    \left\{
	\begin{array}{cl}
	\Sup{\m, \cov \succ 0} & w^\top (\cov+\m\m^\top) w \\
	\st & \Tr{\cov \covsa^{-1}} - \log\det (\cov \covsa^{-1}) - p + (\m - \msa)^\top \covsa^{-1} (\m - \msa) \leq \rho,
	\end{array}
	\right.
	\ee
	which admits the unique optimal solution
    \be \label{eq:KL-mcov}
        \cov\opt =
        \dualvar\opt(\dualvar\opt \covsa^{-1} - ww^\top)^{-1}, \qquad \m\opt =\cov\opt \covsa^{-1} \msa,
    \ee
    with $\dualvar\opt > w^\top \covsa w$ being the unique solution of the nonlinear equation
    \begin{equation}\label{eq:KL-FOC}
        \rho = \frac{(w^\top \msa)^2 w^\top \covsa w}{(\dualvar - w^\top \covsa w)^2} + \frac{w^\top \covsa w}{\dualvar - w^\top \covsa w} + \log\Big( 1 - \frac{w^\top \covsa w}{\dualvar}\Big).
    \end{equation}
    Moreover, we have $\dualvar\opt \le w^\top \covsa w \big(1 + 2\rho + \sqrt{1 + 4 \rho (w^\top \msa)^2} \big)/(2\rho)$.
    \end{subequations}
\end{lemma}
\begin{proof}[Proof of Lemma~\ref{lemma:extreme-KL}]
    First, note that
    \begin{align*}
        f(\beta) & = \Sup{\QQ \in \mbb B} \EE_\QQ\left[(\beta^\top X - Y)^2 \right] = \Sup{\QQ \in \mbb B} \EE_\QQ \left[w^\top \xi \xi^\top w \right] = \Sup{ (\m, \cov) \in {\mbb U} } w^\top \left( \cov + \m\m^\top  \right) w ,
    \end{align*}
    which, by the definition of $\mbb U$ and definition~\eqref{def:KL}, equals to the optimal value of problem~\eqref{eq:KL-subproblem4}.

    From the duality result in Lemma~\ref{lemma:dual-KL}, problem~\eqref{eq:KL-subproblem4} is equivalent to
    \[
	\begin{array}{cl}
	\inf &\dualvar (\rho - \msa^\top \covsa^{-1} \msa ) + (\dualvar \covsa^{-1} \msa)^\top [\dualvar \covsa^{-1} - ww^\top]^{-1} ( \dualvar \covsa^{-1} \msa ) - \dualvar \log \det (I - \covsa^{\half} ww^\top \covsa^\half / \dualvar) \\
	\st & \dualvar \ge 0, \; \dualvar \covsa^{-1} \succ ww^\top.
	\end{array}
	\]
	Applying~\citet[Fact~2.16.3]{ref:bernstein2009matrix}, we have the equalities
	\begin{align*}
	     \det (I - \covsa^{\half} ww^\top \covsa^\half / \dualvar) &= 1 - w^\top \covsa w/\dualvar \\
	     (\dualvar \covsa^{-1} - ww^\top)^{-1} &= \dualvar^{-1} \covsa + \dualvar^{-2} \big( 1 - w^\top \covsa w/\dualvar \big)^{-1} \covsa w w^\top \covsa,
	\end{align*}
	and thus by some algebraic manipulations we can rewrite
	\be \label{eq:KL-extreme1}
	    f (\beta) =  \left\{
	\begin{array}{cl}
	\inf &\dualvar \rho  + \frac{\dualvar (w^\top \msa)^2}{\dualvar - w^\top \covsa w } - \dualvar \log \big( 1 - w^\top \covsa w/\dualvar \big) \\
	\st & \dualvar > w^\top \covsa w.
	\end{array}
	\right.
	\ee
	Let $f_0$ be the objective function of the above optimization problem. The gradient of $f_0$ satisfies
	\[
	    \nabla f_0(\dualvar) = \rho - \frac{(w^\top \msa)^2 w^\top \covsa w}{(\dualvar - w^\top \covsa w)^2} - \frac{w^\top \covsa w}{\dualvar - w^\top \covsa w} - \log\Big( 1 - \frac{w^\top \covsa w}{\dualvar}\Big).
	\]
	By the above expression of $\nabla f_0 (\dualvar)$ and the strict convexity of $f_0 (\dualvar)$, the value $\dualvar\opt$ that solves~\eqref{eq:KL-FOC} is also the unique minimizer of~\eqref{eq:KL-extreme1}. In other words, $f_0 (\kappa) = f(\beta)$.
	
	We now proceed to show that $(\m\opt, \cov\opt)$ defined as in~\eqref{eq:KL-mcov} is feasible and optimal. First, we prove feasibility of $(\m\opt, \cov\opt)$. By direct computation,
	\begin{subequations}
	\be \label{eq:KL-feasiblity1}
	    (\m\opt - \msa)^\top \covsa^{-1} (\m\opt - \msa) = \msa^\top (\covsa^{-1} \cov\opt - I) \covsa^{-1} (\cov\opt \covsa^{-1} - I) \msa = \frac{(\msa^\top w)^2 w^\top \covsa w}{(\dualvar\opt - w^\top \covsa w)^2}.
	\ee
	Moreover, because $\cov\opt \covsa^{-1} = I + (\dualvar\opt - w^\top \covsa w)^{-1} \covsa ww^\top$, we have
	\be \label{eq:KL-feasiblity2}
	    \Tr{\cov\opt \covsa^{-1}} - \log\det (\cov\opt \covsa^{-1}) - p = (\dualvar\opt - w^\top \covsa w)^{-1} w^\top \covsa w + \log \big(1 - \frac{w^\top \covsa w}{\dualvar\opt}\big).
	\ee
	\end{subequations}
	Combining~\eqref{eq:KL-feasiblity1} and~\eqref{eq:KL-feasiblity2}, we have
	\begin{align*}
	     \Tr{\cov\opt \covsa^{-1}} - \log\det (\cov\opt \covsa^{-1}) - p +  (\m\opt - \msa)^\top \covsa^{-1} (\m\opt - \msa) = \rho,
	\end{align*}
	where the first equality follows from the definition of $\mathds D$, and the second equality follows from the fact that $\dualvar\opt$ solves~\eqref{eq:KL-FOC}. This shows the feasibility of $(\m\opt, \cov\opt)$.
	
	Next, we prove the optimality of $(\m\opt, \cov\opt)$.
	Through a tedious computation, one can show that
	\begin{align*}
	    &w^\top (\cov\opt + (\m\opt)(\m\opt)^\top) w = w^\top (\cov\opt + \cov\opt \covsa^{-1} \msa \msa^\top \covsa^{-1} \cov\opt) w\\
	    =& w^\top \covsa w \Big(1 + \frac{w^\top \covsa w}{\dualvar\opt - w^\top \covsa w} \Big) + (\msa^\top w)^2 \Big( 1 + \frac{2 w^\top \covsa w}{\dualvar\opt - w^\top \covsa w} \Big) + \frac{(w^\top \msa)^2 (w^\top \covsa w)^2}{(\dualvar\opt - w^\top \covsa w)^2}\\
	    =& \frac{\dualvar\opt w^\top \covsa w}{\dualvar\opt - w^\top \covsa w} + \frac{(\dualvar\opt)^2 (\msa^\top w)^2}{(\dualvar\opt - w^\top \covsa w)^2} \\
	    =& \frac{\dualvar\opt w^\top \covsa w}{\dualvar\opt - w^\top \covsa w}  + \frac{\dualvar\opt (\msa^\top w)^2 w^\top \covsa w}{(\dualvar\opt - w^\top \covsa w)^2} + \frac{\dualvar\opt (\msa^\top w)^2}{\dualvar\opt - w^\top \covsa w} \\
	    =& \dualvar\opt \rho - \dualvar\opt \log \big( 1- \frac{w^\top \covsa w}{\dualvar\opt} \big)  + \frac{\dualvar\opt (\msa^\top w)^2}{\dualvar\opt - w^\top \covsa w} = f_0(\dualvar\opt) = f(\beta),
	\end{align*}
	where the antepenultimate equality follows from the fact that $\dualvar\opt$ solves~\eqref{eq:KL-FOC}, and the last equality holds because $\dualvar\opt$ is the minimizer of~\eqref{eq:KL-extreme1}. Therefore, $(\m\opt, \cov\opt)$ is optimal to problem~\eqref{eq:KL-subproblem4}. The uniqueness of~$(\m\opt, \cov\opt)$ now follows from the unique solution of $\cov$ and $\mu$ with respect to the dual variables from~\eqref{eq:unique-cov} and~\eqref{eq:unique-mu}, respectively.
	
	It now remains to show the upper bound on $\dualvar\opt$. Towards that end, we note that for any $\dualvar > w^\top \covsa w$, 
	\begin{align*}
	    0 &  = \rho - \frac{(w^\top \msa)^2 w^\top \covsa w}{(\dualvar\opt - w^\top \covsa w)^2} - \frac{w^\top \covsa w}{\dualvar\opt - w^\top \covsa w} - \log\Big( 1 - \frac{w^\top \covsa w}{\dualvar\opt}\Big)  > \rho - \frac{(w^\top \msa)^2 w^\top \covsa w}{(\dualvar\opt - w^\top \covsa w)^2} - \frac{w^\top \covsa w}{\dualvar\opt - w^\top \covsa w}.
	\end{align*}
	Solving the above quadratic inequality in the variable $\dualvar\opt - w^\top \covsa w$ yields the desired bound. This completes the proof.
\end{proof}

We are now ready to prove Proposition~\ref{prop:grad_f_D}.

\begin{proof}[Proof of Proposition~\ref{prop:grad_f_D}]
    The convexity of $f$ follows immediately by noting that it is the pointwise supremum of the family of convex functions $\EE_\QQ[(\beta^\top X - Y)^2]$ parametrized by $\QQ$.

    To prove the continuously differentiability and the formula for the gradient, recall the expression~\eqref{eq:KL-extreme1} for the function $f(\beta)$:
    \begin{equation}\label{opt:f}
        f (\beta)  = \left\{
	\begin{array}{cl}
	\inf &\dualvar \rho  + \frac{\dualvar (w^\top \msa)^2}{\dualvar - w^\top \covsa w } - \dualvar \log \big( 1 - w^\top \covsa w/\dualvar \big) \\
	\st & \dualvar > w^\top \covsa w.
	\end{array}
	\right.
    \end{equation}
Problem~\eqref{opt:f} has only one constraint. Therefore, LICQ (hence MFCQ) always holds, which implies that the Lagrange multiplier $\zeta_\beta$ of problem~\eqref{opt:f} is unique for any $\beta$. Also, it is easy to see that the constraint of problem~\eqref{opt:f} is never binding. So, $\zeta_\beta = 0$ for any $\beta$. The Lagrangian function $L_{\beta}: \R \times \R \rightarrow \R $ is given by
\begin{equation*}
L_{\beta} (\dualvar, \zeta) = \rho \dualvar + \frac{\omega_2 \dualvar}{\dualvar - \omega_1} - \dualvar \log\left(1- \frac{\omega_1}{\dualvar} \right) + \zeta (\omega_1 - \dualvar),
\end{equation*}
where $\omega_1 =  w^\top \covsa w$ and $\omega_2 =  (w^\top \msa)^2$.
The first derivative with respect to $\dualvar$ is 
\begin{equation*}
\frac{\mathrm d L_\beta}{\mathrm d \dualvar}(\dualvar , \zeta) = \rho - \frac{\omega_1 \omega_2}{(\dualvar- \omega_1)^2 } - \log\left( 1 - \frac{\omega_1}{\dualvar} \right) - \frac{\omega_1}{\dualvar - \omega_1} - \zeta.
\end{equation*}
The second derivative with respect to $\dualvar$ is
\begin{equation*}
\frac{\mathrm d^2 L_\beta}{\mathrm d\dualvar^2}(\dualvar , \zeta) = \frac{\omega_1}{(\dualvar - \omega_1)^3} \left( 2\omega_2 + \frac{\omega_1}{\dualvar}(\dualvar - \omega_1) \right) .
\end{equation*}
From the proof of Lemma~\ref{lemma:extreme-KL}, we have that the minimizer $\dualvar_\beta$ of problem~\eqref{opt:f} is precisely the $ \dualvar\opt$ defined by equation~\eqref{eq:KL-FOC} (below we write $\dualvar_\beta$ instead of $ \dualvar\opt$ to emphasize and keep track of the dependence on $\beta$). Therefore, for any $\beta$, the minimizer $\dualvar_\beta$ exists and is unique.
So, there exists some constant $\eta_{\beta} > 0$ such that
\begin{equation*}
\frac{\mathrm d^2 L_{\beta}}{\mathrm d\dualvar^2}(\dualvar_{\beta} , \zeta_{\beta}) \ge \eta_{\beta} >0.
\end{equation*}
Therefore, for any $\beta$, the strong second order condition at $\dualvar_{\beta}$ holds (see~\citet[Definition 6.2]{still2018lectures}). By \citet[Theorem 6.7]{still2018lectures},
\begin{equation}\label{eq:f_grad}
\nabla f (\beta) = \nabla_\beta L_\beta (\dualvar_\beta , \zeta_\beta) = \nabla_\beta L_\beta (\dualvar_\beta , 0)\quad \forall \beta\in \R^d.
\end{equation}
Then we compute
\begin{align*}
\nabla_w L_{\beta} (\dualvar, \zeta) & = \nabla_w \left[ \frac{\dualvar (w^\top \msa)^2}{\dualvar - w^\top \covsa w } - \dualvar \log \left( 1 - \frac{w^\top \covsa w}{\dualvar}  \right) + \zeta (w^\top \covsa w - \dualvar) \right] \\
& = \frac{2\dualvar \omega_2}{(\dualvar - \omega_1)^2} \covsa w + \frac{2\dualvar}{(\dualvar - \omega_1)} \msa \msa^\top w + \frac{2\dualvar}{(\dualvar - \omega_1)} \covsa w + 2\zeta \covsa w.
\end{align*}
Hence,
\begin{align*}
&\, \nabla_\beta L_\beta (\dualvar, \zeta) = \frac{d w}{d\beta}^\top \cdot \nabla_w L_{\beta} (\dualvar, \zeta) = [I_d \ \mathbf{0}_d ] \cdot \nabla_w L_{\beta} (\dualvar, \zeta),
\end{align*}
which, when combined with \eqref{eq:f_grad}, yields the desired gradient formula
\begin{equation*}
\nabla f (\beta) = \frac{2\dualvar_\beta \left(  \omega_2  \covsa w \!+\! (\dualvar_\beta \! - \!\omega_1 ) (\covsa \! +\! \msa \msa^\top) w \right)_{1:d}}{(\dualvar_\beta  - \omega_1 )^2}.
\end{equation*}
By \citet[Theorem 6.5]{still2018lectures}, the function $\beta \mapsto \dualvar_\beta$ is locally Lipschitz continuous, \ie, for any $\beta\in \R^d$, there exists $c_\beta,\epsilon_\beta > 0$ such that if $\norm{\beta' - \beta}_2 \le \epsilon_\beta$, then
\begin{equation*}\label{ineq:gamma_Lip}
|\dualvar_{\beta'} - \dualvar_{\beta}| \le c_\beta \norm{ \beta' - \beta }_2.
\end{equation*}
Note that $\omega_1$ and $\omega_2 $ are both locally Lipschitz continuous in $\beta$. Also, it is easy to see that $\dualvar_\beta > \omega_1 $ for any $\beta$. Thus, $\nabla f (\beta)$ is locally Lipschitz continuous in $\beta$.
\end{proof}

\begin{proof}[Proof of~\ref{prop:wass_interpolation}]
Noting that problem~\eqref{eq:mean_cov_interpolation} is the barycenter problem between two Gaussian distributions with respect to the Wasserstein distance, the proof then directly follows from \citet[\S6.2]{agueh2011barycenters} and \citet[Example~1.7]{ref:mccann1997convexity}.
\end{proof}

\begin{proof}[Proof of Proposition~\ref{prop:IR_W}]
    Again we omit the subscripts $\lambda$ and $\rho$.
    Reminding that $\xi = (X, Y)$, we find
    \begin{equation}\label{eq:wass_f}
        \begin{split}
        &\Sup{\QQ \in \mbb B} \EE_\QQ[(\beta^\top X - Y)^2] = \Sup{\QQ \in \mbb B} \EE_\QQ[(w^\top \xi)^2] \\
        =& \left\{
	\begin{array}{cl}
	\inf & \dualvar \big(\rho - \|\msa\|_2^2 -  \Tr{\covsa} \big) + z + \Tr{Z} \\
	\st & \dualvar \in \R_+, \; z \in \R_+, \; Z \in \PSD^p \\
	& \begin{bmatrix} \dualvar I - ww^\top & \dualvar \covsa^\half \\ \dualvar \covsa^\half & Z \end{bmatrix} \succeq 0, \; \begin{bmatrix} \dualvar I - ww^\top & \dualvar \msa \\ \dualvar \msa^\top & z \end{bmatrix} \succeq 0
	\end{array}
	\right. \\
	=&\left\{
	    \begin{array}{cl}
	        \inf & \dualvar \big(\rho - \|\msa\|_2^2 -  \Tr{\covsa} \big) + \dualvar^2 \msa^\top (\dualvar I - ww^\top)^{-1} \msa + \dualvar^2 \Tr{\covsa (\dualvar I - ww^\top)^{-1}} \\
	        \st & \dualvar \ge \| w \|_2^2 ,
	    \end{array}
	\right.
    \end{split}
    \end{equation}
    where the second equality follows from \citet[Lemma 2]{ref:kuhn2019wasserstein}. By applying~\citet[Fact~2.16.3]{ref:bernstein2009matrix}, we find
	\begin{equation}\label{eq:sherman_morrison}
	     (\dualvar I - ww^\top)^{-1} = \dualvar^{-1} I + \dualvar^{-2} \big( 1 - \|w\|_2^2/\dualvar \big)^{-1}  w w^\top .
	\end{equation}
	Combining \eqref{eq:wass_f} and \eqref{eq:sherman_morrison}, we get
	\[
	    \Sup{\QQ \in \mbb B} \EE_\QQ[(\beta^\top X - Y)^2] =
	    \left\{
	        \begin{array}{cl}
	            \inf & \dualvar \rho + \dualvar  w^\top (\covsa + \msa \msa^\top) w / (\dualvar - \| w \|_2^2 )\\
	            \st & \dualvar \ge \| w\|_2^2.
	        \end{array}
	    \right.
	\]
	One can verify through the first-order optimality condition that the optimal solution $\dualvar\opt$ is
	\[
	    \dualvar\opt = \| w \|_2 \left( \| w \|_2 + \sqrt{\frac{w^\top (\covsa + \msa \msa^\top) w }{\rho}} \right),
	\]
	and by replacing this value $\dualvar\opt$ into the objective function, we find
	\[
	    \Sup{\QQ \in \mbb B} \EE_\QQ[(\beta^\top X - Y)^2] = \big( \sqrt{w^\top (\covsa + \msa\msa^\top) w} + \sqrt{\rho}\|w\|_2 \big)^2,
	\]
	which then completes the proof.
\end{proof}
\newpage
\subsection{Proof of Section~\ref{sec:SI}}

\begin{lemma}[Compactness] \label{lemma:D-set}
For $k\in \{\rm S, \rm T\}$, the set
\[\mbb V_k = \{(\mu, M) \in \R^p\times \mbb S^p_{++} : M- \mu \mu^\top \in \mbb S_{++}^p, \mathds{D}((\mu, M-\mu\mu^\top) \parallel (\msa_k, \covsa_k)) \leq \rho_{k} \} \] 
is convex and compact. Furthermore, the set
\[\mbb V \Let \{(\m, M) \in \R^p\times \mbb S^p_{++} :(\m, M- \m\m^\top) \in \mbb U_{\rho_{\rm S}, \rho_{\rm T}}\} \]
is also convex and compact.
\end{lemma}

\begin{proof}[Proof of Lemma~\ref{lemma:D-set}]
	For any $(\mu, M) \in \R^p\times \mbb S^p_{++} $ such that $M - \m\m^\top \in \PD^p$, we find
	\begin{align}
		&\mathds D\big((\m, M-\m\m^\top) \parallel (\msa_k, \covsa_k)\big) \notag\\
		=& (\m - \msa_k)^\top\covsa^{-1}_k (\m - \msa_k) +\Tr{(M - \m\m^\top) \covsa^{-1}} - \log\det ((M - \m\m^\top) \covsa_k^{-1}) - p \notag\\
		=& \msa_k^\top \covsa_k^{-1} \msa_k - 2 \msa_k^\top \covsa_k^{-1} \m+\Tr{M\covsa_k^{-1}} - \log\det (M\covsa_k^{-1}) - \log(1- \m^\top M^{-1} \m) - p \label{eq:divergence_mu_M},
	\end{align}
	where in the last expression, we have used the determinant formula~\citep[Fact~2.16.3]{ref:bernstein2009matrix} to rewrite
	\[
	    \det(M - \m\m^\top) = (1 - \m^\top M^{-1} \m) \det M.
	\]

	Because $M - \m\m^\top \in \PD^p$, one can show that $1 - \m^\top M^{-1} \m > 0$ by invoking the Schur complement, and as such, the logarithm term in the last expression is well-defined. Moreover, we can write 
	\begin{align}
	\mbb V_k = \left\{(\m, M) :
	\begin{array}{l}
	(\m, M) \in \R^p \times \PD^p,~M - \m\m^\top \in \PD^p,~\exists t \in \R_+: \\
	\msa_k^\top \covsa_k^{-1} \msa_k - 2 \msa_k^\top \covsa_k^{-1} \m+\Tr{M\covsa_k^{-1}} - \log\det (M\covsa_k^{-1}) - \log(1- t) - p \leq \rho \\
	\begin{bmatrix} M & \m \\ \m^\top & t \end{bmatrix} \succeq 0	\end{array}
	\right\}, \label{eq:D-refor}
	\end{align}
	which is a convex set. Notice that by Schur complement, the semidefinite constraint is equivalent to $t \ge \m^\top M^{-1} \m$.
	
	Next, we show that $\mbb V_k$ is compact. Denote by $\mbb U_k =  \{ (\m, \cov)\in \R^p\times \PSD^p:  \mathds{D}( (\m, \cov)\! \parallel\! (\msa_k, \covsa_k) )\le \rho_k \}$. Then, it is easy to see that $\mbb V_k$ is the image of $\mbb U_k$ under the continuous mapping $(\m, \cov) \mapsto (\m, \cov + \m\m^\top)$. Therefore, it suffices to prove the compactness of $\mbb U_k$. Towards that end, we note that 
	\[ {\mathds D} \big( (\m, \cov) \parallel (\msa_k, \covsa_k) \big) =(\msa_k - \m)^\top\covsa_k^{-1} (\msa_k - \m) +  \Tr{\cov \covsa_k^{-1}} - \log\det (\cov \covsa_k^{-1}) - p  \]
	is a continuous and coercive function in $(\m, \cov)$. Thus, as a level set of ${\mathds D} \big( (\m, \cov) \parallel (\msa_k, \covsa_k) \big)$, $\mbb U_k$ is closed and bounded, and hence compact.
	
	To prove the last claim, by the definitions of $\mbb V$ and $\mbb U_{\rho_{\rm S}, \rho_{\rm T}}$ we write
	\begin{align}
	    &\mbb V = \{(\m, M) \in \R^p\times \mbb S^p_{++} :(\m, M- \m\m^\top) \in \mbb U_{\rho_{\rm S}, \rho_{\rm T}}\}\notag\\
	    = &\{(\m, M) \in \R^p\times \mbb S^p_{++} : (\m, M) \in \mbb V_{\rm S}\} \cap \{(\m, M) \in \R^p\times \mbb S^p_{++}: (\m, M) \in \mbb V_{\rm T} \}\cap \{(\m, M) \in \R^p\times \mbb S^p_{++}: M\succeq \eps I\} \label{eq:V_intersect}.
	\end{align}
	The convexity of $\{(\m, M) \in \R^p\times \mbb S^p_{++} :(\m, M- \m\m^\top) \in \mbb U_{\rho_{\rm S}, \rho_{\rm T}}\}$ then follows from the convexity of the three sets in~\eqref{eq:V_intersect}.
	Furthermore, from the first part of the proof, we know that both $\{(\m, M) \in \R^p\times \mbb S^p_{++} : (\m, M) \in \mbb V_{\rm S}\}$ and $\{(\m, M) \in \R^p\times \mbb S^p_{++}: (\m, M) \in \mbb V_{\rm T} \}$ are compact sets, so is their intersection. Also, the last set $\{(\m, M) \in \R^p\times \mbb S^p_{++}: M\succeq \eps I\}$ in \eqref{eq:V_intersect} is closed. Since any closed subset of a compact set is again compact, we conclude that $\mbb V$ is compact. This completes the proof.

\end{proof}

\begin{proof}[Proof of Theorem~\ref{thm:ls-kl}]
    As $\xi = (X, Y)$, we can rewrite
    \begin{subequations}
    \begin{align}
        & \Min{\beta \in \R^d} \Sup{\QQ \in \mbb B_{\rho_{\rm S}, \rho_{\rm T}}} \EE_\QQ[(\beta^\top X - Y)^2] \\
        =&\Min{\beta \in \R^d} 
        \Sup{\QQ \in \mbb B_{\rho_{\rm S}, \rho_{\rm T}}} \!\begin{bmatrix} \beta \\ -1 \end{bmatrix}^\top \EE_\QQ[\xi \xi^\top] \begin{bmatrix} \beta \\ -1 \end{bmatrix}
        \\
        = &\Min{\beta \in \R^d} \Sup{(\m, M- \m\m^\top) \in \mbb U_{\rho_{\rm S}, \rho_{\rm T}}} \begin{bmatrix} \beta \\ -1 \end{bmatrix}^\top M \begin{bmatrix} \beta \\ -1 \end{bmatrix} \notag\\
        = &\Min{\beta \in \R^d} \Sup{(\m, M) \in \mbb V} \begin{bmatrix} \beta \\ -1 \end{bmatrix}^\top M \begin{bmatrix} \beta \\ -1 \end{bmatrix} \notag\\
        = &\Sup{(\m, M) \in \mbb V}\Min{\beta \in \R^d}  \begin{bmatrix} \beta \\ -1 \end{bmatrix}^\top M \begin{bmatrix} \beta \\ -1 \end{bmatrix} \label{eq:aux-1}\\
        =& \Sup{(\m, M) \in \mbb V}~M_{YY} - M_{XY}^\top M_{XX}^{-1} M_{XY} \label{eq:aux-2}
\end{align}
\end{subequations}
where~\eqref{eq:aux-1} follows from the Sion's minimax theorem, which holds because the objective function is convex in $\beta$, concave in $M$, and Lemma~\ref{lemma:D-set}. Equation~\eqref{eq:aux-2} exploits the unique optimal solution in $\beta$ as $\beta\opt = M_{XX}^{-1} M_{XY}$, in which the matrix inverse is well defined because $M \succ 0$ for any feasible $M$.

Finally, after an application of the Schur complement reformulation to~\eqref{eq:aux-2}, the nonlinear semidefinite program in the theorem statement follows from representations~\eqref{eq:D-refor} and \eqref{eq:V_intersect}. This completes the proof.
\end{proof}

\begin{proof}[Proof of Proposition~\ref{prop:minimum_radius}]
It is well-known that the space of probability measures equipped with the Wasserstein distance $W_2 $ is a geodesic metric space (see \citet[Section 7]{ref:villani2008optimal} for example), meaning that for any two probability distributions $\mc N_0$ and $\mc N_1$, there exists a constant-speed geodesic curve $[0,1] \ni a\mapsto \mc N_a$ satisfying 
\[ W_2 ( \mc N_a, \mc N_{a'} ) = |a - a'| W_2 ( \mc N_0, \mc N_1 ) \quad\forall a,a'\in [0,1].\]

The claim follows trivially if $W_2 ( \mc N_{\rm S}, \mc N_{\rm T} ) \le \sqrt{\rho_{\rm S}}$. Therefore, we assume $W_2 ( \mc N_{\rm S}, \mc N_{\rm T} ) >\sqrt{\rho_{\rm S}}$.

Consider the the geodesic $\mc N_t$ from $\mc N_0 = \mc N_{\rm S}$ to $\mc N_1 = \mc N_{\rm T}$. Also, denote by $\mbb U_k =  \{ (\m, \cov)\in \R^p\times \PSD^p:  \mathds{D}( (\m, \cov)\! \parallel\! (\msa_k, \covsa_k) )\le \rho_k \}$ for $k\in \{\rm S, \rm T\} $. Then, $\mbb U_{\rm S}$ and $\mbb U_{\rm T} $ has empty intersection if and only if 
\[ W_2 ( \mc N_a , \mc N_{\rm S} ) \le \sqrt{\rho_{\rm S}} \Longrightarrow W_2 ( \mc N_a , \mc N_{\rm T} ) > \sqrt{\rho_{\rm T}} \quad\forall a\in [0,1],\]
which is in turn equivalent to 
\[ a W_2 ( \mc N_{\rm T} , \mc N_{\rm S} ) \le \sqrt{\rho_{\rm S}} \Longrightarrow (1-a) W_2 ( \mc N_{\rm T} , \mc N_{\rm S} ) \le \sqrt{\rho_{\rm T}} \quad \forall a\in[0,1].\]
Picking $a = \frac{\sqrt{\rho_{\rm S}}}{W_2 ( \mc N_{\rm T} , \mc N_{\rm S} )} \in (0,1)$, then we have
\begin{align*}
    \left( 1 - \frac{\sqrt{\rho_{\rm S}}}{W_2 ( \mc N_{\rm T} , \mc N_{\rm S} )} \right) W_2 ( \mc N_{\rm T} , \mc N_{\rm S} ) \le \sqrt{\rho_{\rm T}}.
\end{align*}
The above inequality can be rewritten as
\[ W_2 ( \mc N_{\rm T} , \mc N_{\rm S} ) \le \sqrt{\rho_{\rm S}} + \sqrt{\rho_{\rm T}}, \]
which contradicts with our supposition
\[\rho_{\rm T}\geq \left(\sqrt{\mathds  W((\msa_{\rm S}, \covsa_{\rm S}) \parallel (\msa_{\rm T}, \covsa_{\rm T}))} - \sqrt{\rho_{\rm S}}\right)^{2}.\]
Thus, $\mbb U_{\rm S}$ and $\mbb U_{\rm T}$ has non-empty intersection.
\end{proof}

\begin{proof}[Proof of Theorem~\ref{thm:ls-w}]
    As $\xi = (X, Y)$, we can rewrite
    \begin{subequations}
    \begin{align}
    &\Min{\beta \in \R^d} \Sup{\QQ \in \mbb B_{\rho_{\rm S}, \rho_{\rm T}}(\Pnom)} \EE_\QQ[(\beta^\top X - Y)^2] \\
        = & \Min{\beta \in \R^d} \Sup{(\m, M - \mu \mu^\top) \in \mbb U_{\rho_{\rm S}, \rho_{\rm T}}} \begin{bmatrix} \beta \\ -1 \end{bmatrix}^\top M \begin{bmatrix} \beta \\ -1 \end{bmatrix} \notag\\
        = & \Sup{(\m, M - \m\m^\top) \in \mbb U_{\rho_{\rm S}, \rho_{\rm T}}}\Min{\beta \in \R^d}  \begin{bmatrix} \beta \\ -1 \end{bmatrix}^\top M \begin{bmatrix} \beta \\ -1 \end{bmatrix} \label{eq:auxg-1}\\
        = & \Sup{(\m, M - \m\m^\top) \in \mbb U_{\rho_{\rm S}, \rho_{\rm T}}}~M_{YY} - M_{XY}^\top M_{XX}^{-1} M_{XY} 
        \label{eq:auxg-3}
\end{align}
\end{subequations}
where~\eqref{eq:auxg-1} follows from the Sion's minimax theorem, which holds because the objective function is convex in $\beta$, concave in $M$, and the set $\mbb U_{\rho_{\rm S}, \rho_{\rm T}}$ is compact~~\citep[Lemma~A.6]{ref:abadeh2018wasserstein}. Equation~\eqref{eq:auxg-3} exploits the unique optimal solution in $\beta$ as $\beta\opt = M_{XX}^{-1} M_{XY}$, in which the matrix inverse is well defined because $M - \m\m^\top \succeq \eps I$ for any feasible $M$.
\end{proof}

\section{Additional Numerical Results}
In the following the details of the datasets used in Section~\ref{sec:numerical} are presented.
    \begin{itemize}[leftmargin = 3mm]
      \item \textbf{Uber$\&$Lyft\footnote{Available publicly at~\url{https://www.kaggle.com/brllrb/uber-and-lyft-dataset-boston-ma}}} has~$N_{\rm S} = 5000$ instances in the source domain and~5000 available samples in the target domain. 
      \item \textbf{US Births~(2018)\footnote{Available publicly at~\url{https://www.kaggle.com/des137/us-births-2018}}} has~$N_{\rm S} = 5172$ samples in the source domain and~4828 available samples in the target domain.
       \item \textbf{Life Expectancy{\footnote{Available publicly at~\url{https://www.kaggle.com/kumarajarshi/life-expectancy-who}}}} has~$N_{\rm S} = 1407$ instances in the source domain and~242 available samples in the target domain.
     \item \textbf{House Prices in King County\footnote{Available publicly at~\url{https://www.kaggle.com/c/house-prices-advanced-regression-techniques/data}}} has~$N_{\rm S} = 543$ instances in the source domain and~334 available samples in the target domain.
         \item \textbf{California Housing Prices\footnote{The modified version that we use is available publicly at~\url{https://www.kaggle.com/camnugent/california-housing-prices} and the original dataset is available publicly at~\url{https://www.dcc.fc.up.pt/~ltorgo/Regression/cal_housing.html}}} has $N_{\rm S} = 9034$ instances in the source domain, and~6496 available instances in the target domain.
    \end{itemize}
    \begin{figure}
     \centering
     \begin{subfigure}[b]{0.45\textwidth}
         \centering
         \includegraphics[width=\textwidth]{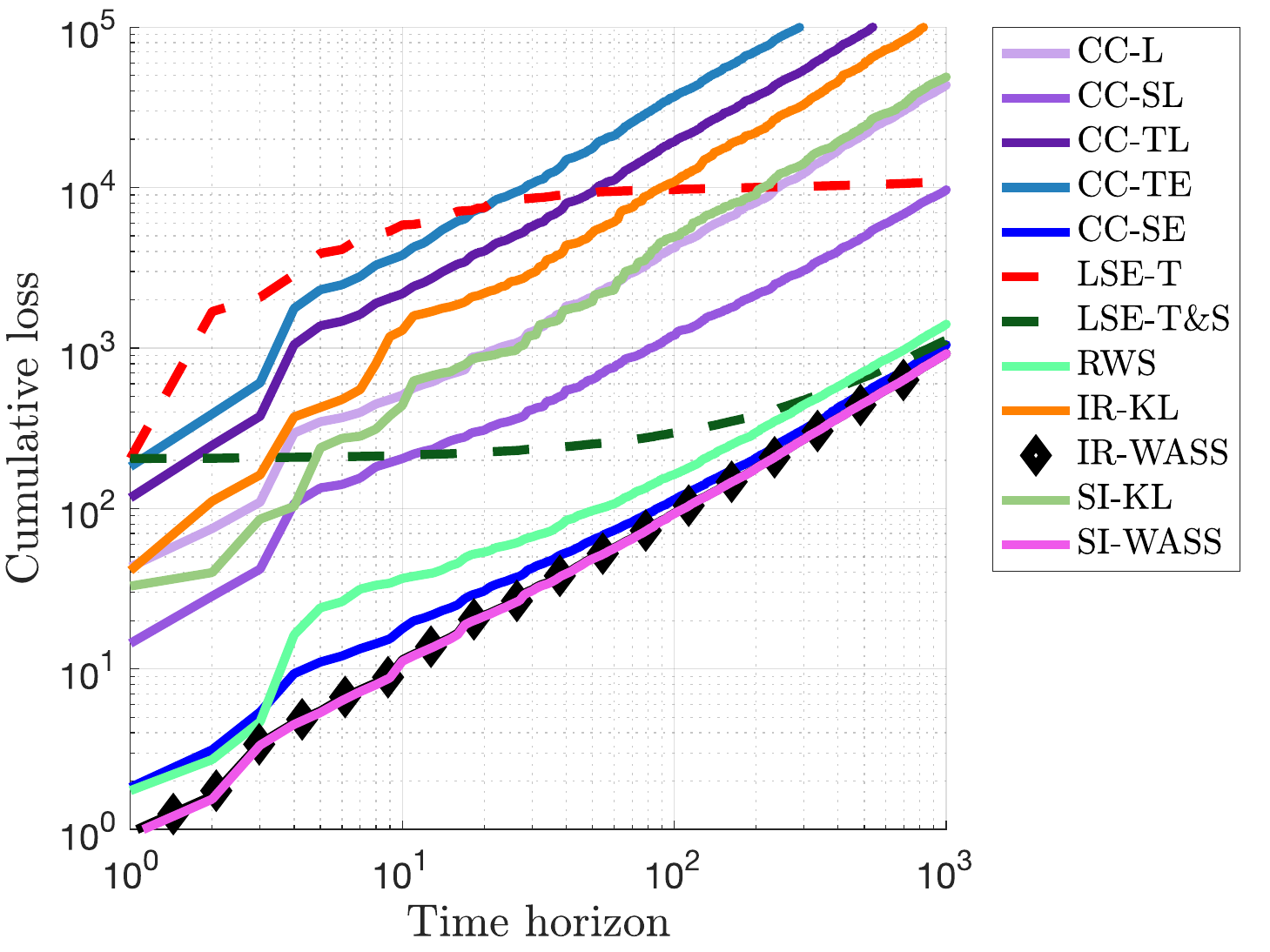}
         \caption{US Births (2018)}
         \label{fig:us_births}
     \end{subfigure}\hfill
     \begin{subfigure}[b]{0.45\textwidth}
         \centering
         \includegraphics[width=\textwidth]{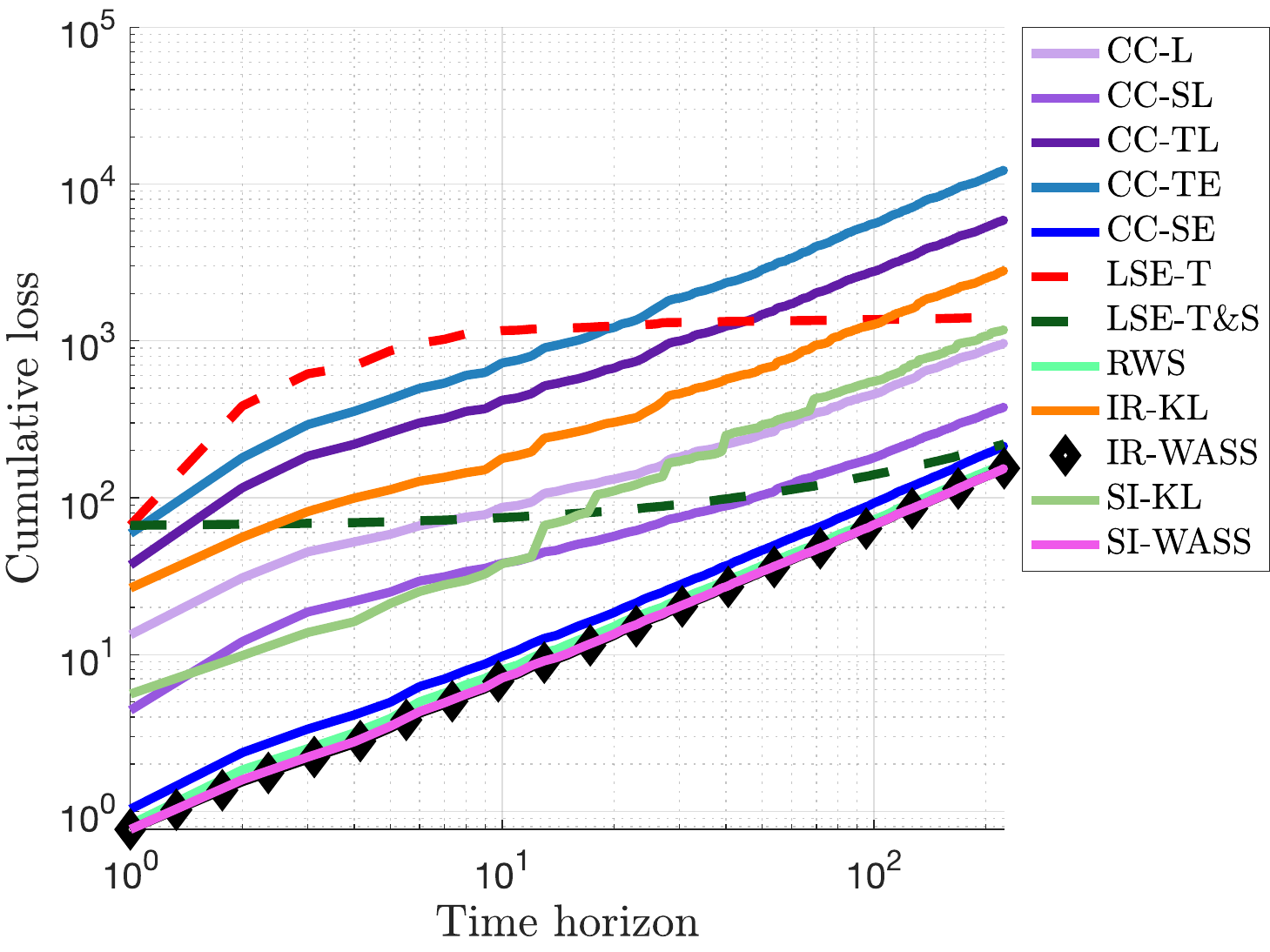}
         \caption{Life Expectancy}
         \label{fig:life_exp}
     \end{subfigure}\hfill
     \begin{subfigure}[b]{0.45\textwidth}
         \centering
         \includegraphics[width=\textwidth]{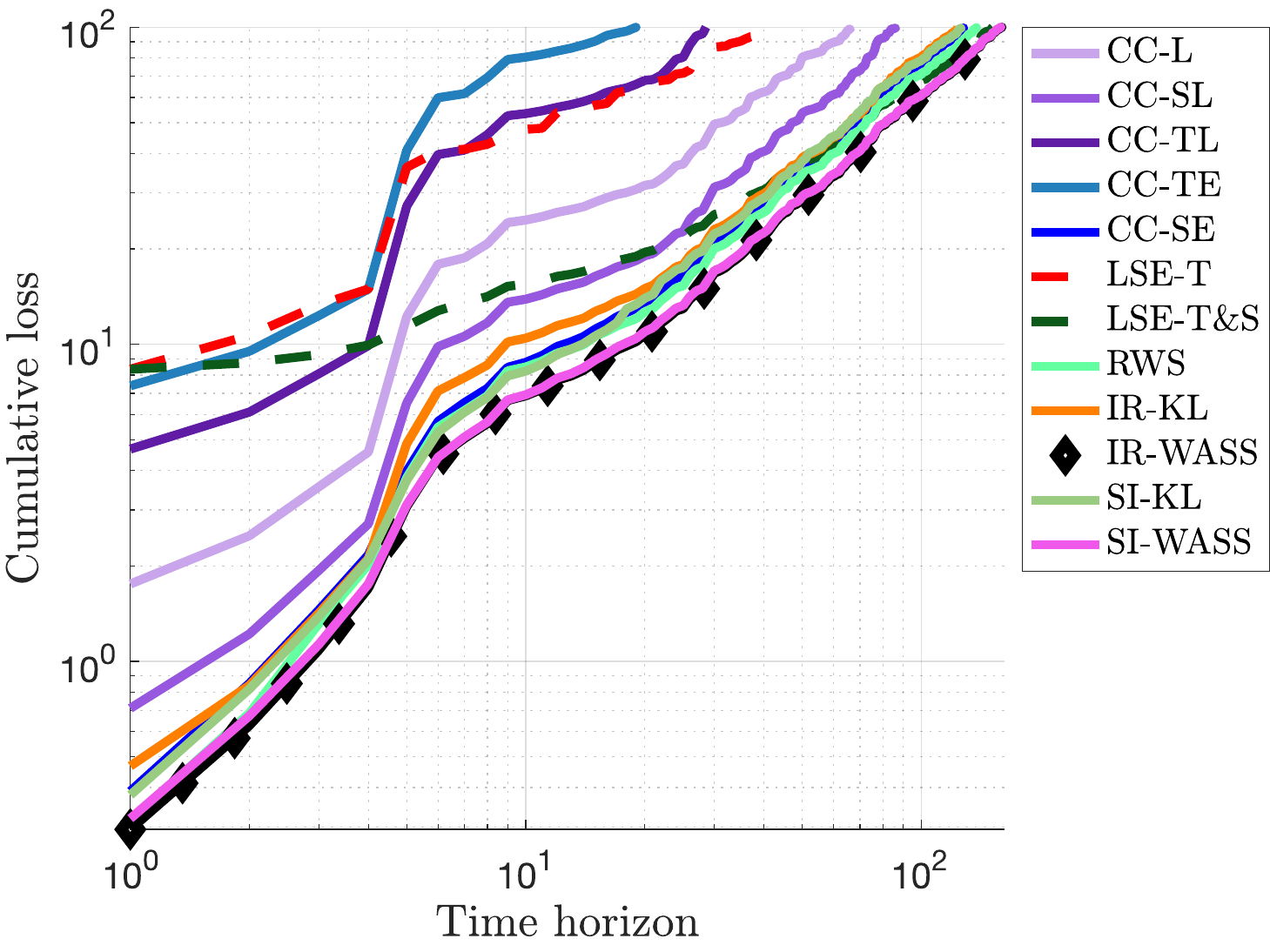}
         \caption{House Prices in KC}
         \label{fig:houses}
     \end{subfigure}\hfill
     \begin{subfigure}[b]{0.45\textwidth}
         \centering
         \includegraphics[width=\textwidth]{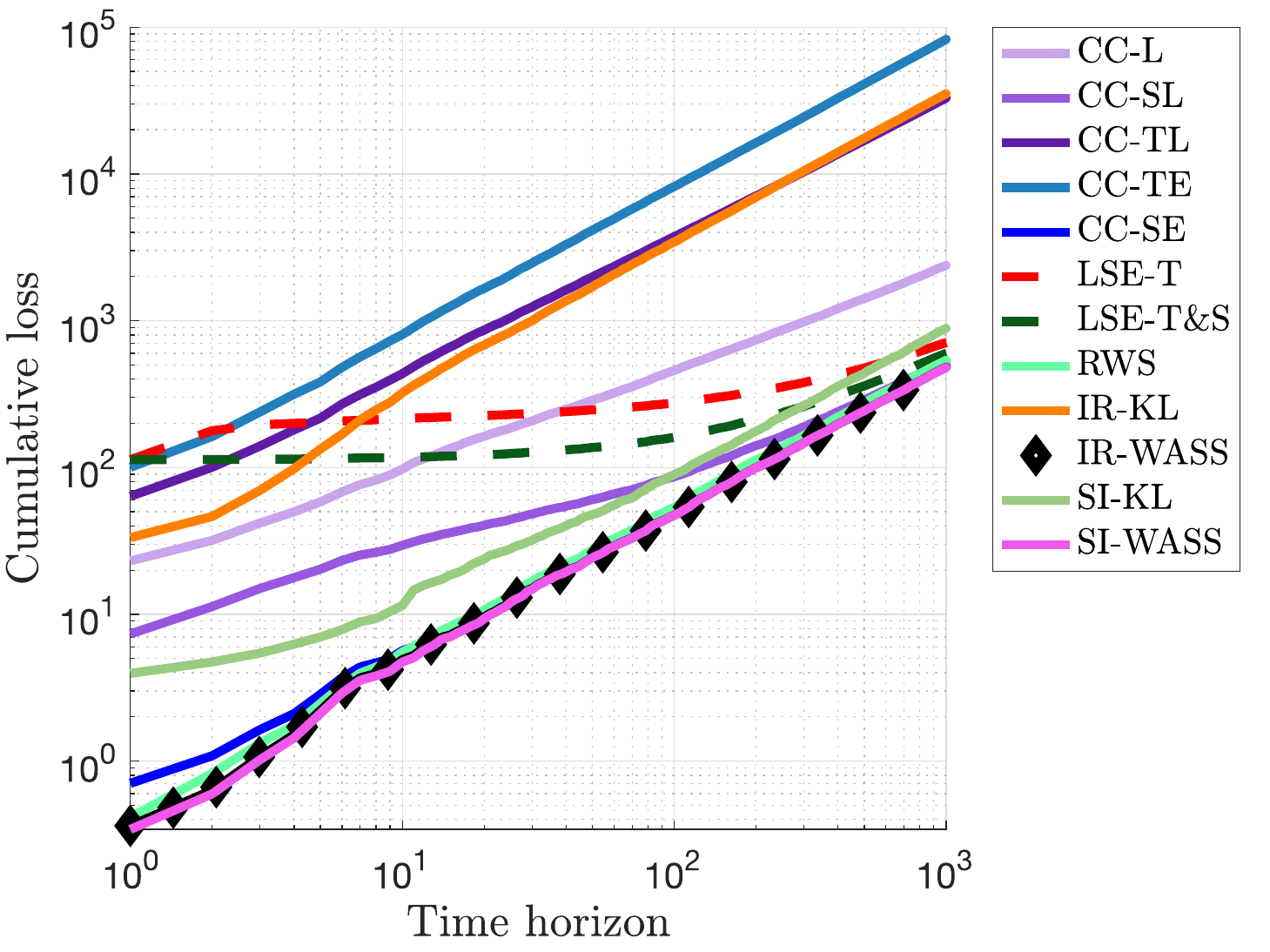}
         \caption{California Housing}
         \label{fig:california_housing}
     \end{subfigure}\hfill
        \caption{Cumulative loss averaged over 100 runs on logarithmic scale}
        \label{fig:cum_loss_all}
\end{figure}
Figure~\ref{fig:cum_loss_all} demonstrates how the average cumulative loss in~\eqref{eq:cumulative-loss} grows over time for the US Births (2018), Life Expectancy, House Prices in KC and California Housing datasets. 
The results suggest that the IR-WASS and SI-WASS experts perform favorably over the competitors in that their cumulative loss at each time step is lower than that of most other competitors.  

\end{document}


\appendix
	\renewcommand\thesection{\Alph{section}}
	\renewcommand{\theequation}{A.\arabic{equation}}
	\renewcommand{\thefigure}{A.\arabic{figure}}
	\renewcommand{\thetable}{A.\arabic{table}}
\onecolumn
\title{Appendix \\
Sequential Domain Adaptation \\ by Synthesizing Distributionally Robust Experts
}
\date{}
\maketitle
\section{Appendix}
\subsection{Proof of Section~\ref{sec:IR}}
\begin{proof}[Proof of Proposition~\ref{prop:kl_interpolation}]
Note that optimization problem~\eqref{eq:mean_cov_interpolation} constitutes an unbounded convex optimization problem when $\psi$ is the Kullback-Leibler-type divergence of Definition~\ref{def:divergence}.
Let $g(\mu, \Sigma) \Let \lambda \mathds{D}((\mu, \Sigma) \parallel (\msa_{\rm S}, \covsa_{\rm S})) + (1-\lambda) \mathds{D}((\mu, \Sigma) \parallel (\msa_{\rm T}, \covsa_{\rm T}))$,
then, the first order optimality condition reads
\begin{align*}
\nabla_{\mu}g(\mu, \Sigma) &= 2\lambda\covsa^{-1}_{\rm S} (\mu-\msa_{\rm S}) + 2(1-\lambda)\covsa^{-1}_{\rm T} (\mu-\msa_{\rm T}) = 0,\\
\nabla_{\Sigma}g(\mu, \Sigma) &= \lambda \covsa^{-1}_{\rm S} - \lambda \Sigma^{-1} + (1-\lambda) \covsa^{-1}_{\rm T} -( 1 - \lambda) \Sigma^{-1} = 0.
\end{align*}
One can then show $(\msa_\lambda, \covsa_\lambda)$ provided in statement of Proposition~\ref{prop:kl_interpolation} solves the system of equalities above.
\end{proof}

Below we prove Proposition~\ref{prop:grad_f_D}. In the proof of Proposition~\ref{prop:grad_f_D} and its auxiliary lemmas, Lemma~\ref{lemma:dual-KL} and Lemma~\ref{lemma:extreme-KL}, we omit the subscripts $\lambda$ and $\rho$ to avoid clutter.


\viet{
\begin{lemma}[Dual problem, backward KL] \label{lemma:dual-KL}
    Fix $(\msa, \covsa) \in \R^p \times \PD^p$ and $\rho \ge 0$. For any symmetric matrix $H \in \Sym^p$, the optimization problem
    \begin{subequations}
    \be \label{eq:KL-subproblem2}
    \left\{
	\begin{array}{cl}
	\Sup{\m, \cov } & \Tr{H (\cov+\m\m^\top)} \\
	\st & \Tr{\covsa \cov^{-1}} - \log\det (\covsa \cov^{-1}) - p + (\m - \msa)^\top \cov^{-1} (\m - \msa) \leq \rho, \\
	& \cov \succ 0
	\end{array}
	\right.
	\ee
	admits the dual formulation
	\[
	    something here
	\]
    \end{subequations}
\end{lemma}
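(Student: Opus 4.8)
The plan is to derive the dual by Lagrangian duality, after first checking the problem is well posed. The feasible set is nonempty since $(\msa,\covsa)$ itself gives KL value $0 \le \radius$, and I would argue the supremum is finite and attained: as $\cov \to 0$ the term $\Tr{\covsa\cov^{-1}}$ diverges, while as $\cov$ grows unboundedly the $-\log\det(\covsa\cov^{-1}) = \log\det\cov - \log\det\covsa$ term diverges, so the constraint confines $\cov$—and through the quadratic term also $\m$—to a bounded region on which the continuous objective attains its maximum. I would then attach a single multiplier $\dualvar \ge 0$ to the KL constraint and form
\[
L(\m,\cov;\dualvar) = \Tr{H(\cov+\m\m^\top)} - \dualvar\big(\Tr{\covsa\cov^{-1}} - \log\det(\covsa\cov^{-1}) - p + (\m-\msa)^\top\cov^{-1}(\m-\msa) - \radius\big),
\]
so that weak duality gives $\text{primal} \le \Inf{\dualvar\ge 0}\Sup{\m,\cov\succ 0} L$.

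I would evaluate the inner maximization in two stages. Fixing $\cov$, the dependence on $\m$ is the concave quadratic $\m^\top H\m - \dualvar(\m-\msa)^\top\cov^{-1}(\m-\msa)$, which is bounded above precisely when $\dualvar\cov^{-1} \succeq H$; in the strict case its maximizer is $\m\opt = (\dualvar\cov^{-1}-H)^{-1}\dualvar\cov^{-1}\msa$, and completing the square eliminates $\m$ in closed form. I would then maximize over $\cov\succ 0$ by setting the matrix gradient to zero, using $d\cov^{-1} = -\cov^{-1}(d\cov)\cov^{-1}$ and $d\log\det\cov = \Tr{\cov^{-1}d\cov}$. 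Pre- and post-multiplying the resulting stationarity condition by $\cov$ gives the equation
\[
\cov H\cov + \dualvar\covsa - \dualvar\cov + \dualvar(\m\opt-\msa)(\m\opt-\msa)^\top = 0,
\]
which, combined with the $\m$-optimality relation $(\dualvar\cov^{-1}-H)\m\opt = \dualvar\cov^{-1}\msa$, pins down $\cov\opt$ as a function of $\dualvar$ and collapses $g(\dualvar) = \Sup{\m,\cov} L$ to a closed-form expression in the scalar $\dualvar$ alone; the domain restriction $\dualvar\cov^{-1}\succ H$ meanwhile becomes a lower bound on $\dualvar$ that carves out the feasible region of the dual infimum.

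The hard part will be two-fold. First, the primal maximizes a (for $H\succeq 0$) convex objective over a set whose defining function is neither convex nor concave in $(\m,\cov)$, so strong duality does not follow from Slater's condition; I would close the duality gap either by passing to the natural parameters $(\cov^{-1},\,\cov^{-1}\m)$, under which the constraint becomes jointly convex, or—more concretely—by certifying the KKT system directly: the stationary pair $(\m\opt,\cov\opt)$ together with the minimizing $\dualvar\opt$ must satisfy complementary slackness with the KL constraint active, and exhibiting such a primal point whose value equals $g(\dualvar\opt)$ forces the bound to be tight. Second, solving the coupled matrix stationarity equation explicitly is delicate; I expect to tame it by the congruence $\cov \mapsto \covsa^{-1/2}\cov\covsa^{-1/2}$ (equivalently normalizing $\covsa = I$) and diagonalizing in the $\covsa^{1/2}H\covsa^{1/2}$ eigenbasis, which decouples the equation into scalar eigenvalue relations and yields the advertised dual.
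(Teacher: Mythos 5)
Your compactness argument at the start is correct (and worth keeping: the backward constraint is coercive in both directions, so the primal is finite and attained), but the core of your plan fails at the first dualization step. For the backward divergence, the single-multiplier Lagrangian dual is identically $+\infty$ whenever $H$ has a positive eigenvalue — which is exactly the relevant case downstream, where $H = ww^\top$. To see this, fix any $\dualvar \ge 0$, let $v$ be a unit eigenvector of a positive eigenvalue of $H$, set $\m = \msa$ and $\cov_t = I + t\,vv^\top$: then $\Tr{H\cov_t}$ grows linearly in $t$, while the constraint terms in $L$ contribute only $-\dualvar \log\det \cov_t = -\dualvar\log(1+t)$ plus bounded quantities, since $\Tr{\covsa\cov_t^{-1}}$ stays bounded as $t \to \infty$. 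Hence $\Sup{\m,\,\cov \succ 0} L(\m,\cov;\dualvar) = +\infty$ for every $\dualvar$, and your two-stage evaluation is invalid: the boundedness condition $\dualvar\cov^{-1} \succeq H$ involves the \emph{inner} variable $\cov$, so it cannot become "a lower bound on $\dualvar$ that carves out the dual feasible region" — on the part of the $\cov$-space where it fails, the inner supremum has already blown up. The pair you extract from the matrix stationarity equation is therefore a saddle point of $L$, not its supremum, and weak duality never connects your closed-form $g(\dualvar)$ to the primal. Neither proposed rescue repairs this: in the natural parameters $(S,s) = (\cov^{-1}, \cov^{-1}\m)$ the constraint does become jointly convex, but the Lagrangian then contains $\Tr{H S^{-1}}$, which again diverges linearly as $S \to 0$ against only the logarithmic penalty $\dualvar \log\det(\covsa S)$; and KKT certification cannot close a gap against a dual that is identically $+\infty$.

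This is precisely the structural difference from the version the paper actually proves. The paper's completed lemma and proof concern the \emph{forward} constraint $\Tr{\cov\covsa^{-1}} - \log\det(\cov\covsa^{-1}) - p + (\m-\msa)^\top\covsa^{-1}(\m-\msa) \le \rho$, whose trace term grows \emph{linearly} in $\cov$, so that $\phi\covsa^{-1} \succ H$ is a genuine condition on the multiplier and the dual is finite; the proof there is a nested argument (for fixed $\m$, dualize the $\cov$-subproblem, a linear objective over a convex set with Slater point $\covsa$, handling the boundary cases by a l'Hopital limit; swap $\sup$ and $\inf$ by Sion's minimax theorem; dualize the $\m$-subproblem with a second multiplier $\nu$; merge $\dualvar = \phi + \nu$ and eliminate $\phi$ by monotonicity). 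None of these steps transfers to the backward constraint, where the $\m$-quadratic is weighted by the variable $\cov^{-1}$ and large $\cov$ is penalized only through $\log\det\cov$. Note also that the statement you were handed is a draft stub — its dual formulation literally reads "something here" — and the paper nowhere supplies a dual for the backward variant; a correct treatment would require a genuinely different dualization (exploiting the compactness you established, e.g., a spectral or rank-one reduction for $H = ww^\top$), not the Lagrangian route you outline.
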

}

\begin{lemma}[Dual problem] \label{lemma:dual-KL}
    Fix $(\msa, \covsa) \in \R^p \times \PD^p$ and $\rho \ge 0$. For any symmetric matrix $H \in \Sym^p$, the optimization problem
    \begin{subequations}
    \be \label{eq:KL-subproblem2}
    \left\{
	\begin{array}{cl}
	\Sup{\m, \cov } & \Tr{H (\cov+\m\m^\top)} \\
	\st & \Tr{\cov \covsa^{-1}} - \log\det (\cov \covsa^{-1}) - p + (\m - \msa)^\top \covsa^{-1} (\m - \msa) \leq \rho, \\
	& \cov \succ 0
	\end{array}
	\right.
	\ee
	admits the dual formulation
	\be \label{eq:KL-subproblem1}
	\left\{
	\begin{array}{cl}
	\inf &\dualvar (\rho - \msa^\top \covsa^{-1} \msa ) + \dualvar^2  \msa^\top \covsa^{-1} [\dualvar \covsa^{-1} - H]^{-1} \covsa^{-1} \msa  - \dualvar \log \det (I - \covsa^{\half} H \covsa^\half / \dualvar) \\
	\st & \dualvar \ge 0, \; \dualvar \covsa^{-1} \succ H.
	\end{array}
	\right.
	\ee
    \end{subequations}
\end{lemma}
\begin{proof}[Proof of Lemma~\ref{lemma:dual-KL}]
    For any $\m \in \R^p$ such that $(\m - \msa)^\top \covsa^{-1} (\m - \msa) \le \rho$, denote the set $\mc S_\m$ as
	\[
	\mc S_\m \Let \left\{
	\cov \in \PD^p : \Tr{\cov \covsa^{-1}} - \log\det \cov\leq \rho_\mu
	\right\},
	\]
	where $\rho_\m \in \R$ is defined as $\rho_{\m} \Let \rho + p - \log\det\covsa - (\m - \msa)^\top \covsa^{-1}(\m - \msa)$. Using these auxiliary notations, problem~\eqref{eq:KL-subproblem2} can be re-expressed as a nested program of the form
	\[
	\begin{array}{cl}
	\Sup{\m} & \m^\top H \m + \Sup{\cov \in \mc S_\m} ~ \Tr{H \cov} \\
	\st & (\m - \msa)^\top \covsa^{-1} (\m - \msa) \leq \rho,
	\end{array}
	\]
	where we emphasize that the constraint on $\m$ is redundant, but it is added to ensure the feasibility of the inner supremum over $\cov$ for every feasible value of $\m$ of the outer problem. We now proceed to reformulate the supremum subproblem over $\cov$.
	
	Assume momentarily that $H \neq 0$ and that $\m$ satisfies $(\m - \msa)^\top \covsa^{-1} (\m - \msa) < \rho$. In this case, one can verify that $\covsa$ is a Slater point of the convex set $\mc S_\m$. Using a duality argument, we find
	\begin{align*}
	\Sup{\cov \in \mc S_\m} ~ \Tr{H \cov} =& \Sup{\cov \succ 0} \Inf{\phi \ge 0} ~\Tr{H \cov} + \phi \big(\rho_\m - \Tr{\covsa^{-1} \cov} + \log\det \cov \big) \notag\\
	=& \Inf{\phi \ge 0} ~\left\{ \phi \rho_\m + \Sup{\cov \succ 0}~ \big\{ \Tr{(H - \phi \covsa^{-1})\cov}  + \phi \log \det \cov \big\} \right\},  \notag
	\end{align*}
	where the last equality follows from strong duality~\citep[Proposition~5.3.1]{ref:bertsekas2009convex}. 
	If $H - \phi \covsa^{-1} \not\prec 0$, then the inner supremum problem becomes unbounded. To see this, let $\sigma \in \R_+$ be the maximum eigenvalue of $H - \phi \covsa^{-1}$ with the corresponding eigenvector $v$, then the sequence $(\Sigma_k)_{k\in \mbb N}$ with $\Sigma_k = I + k vv^\top$ attains the asymptotic maximum objective value of $+\infty$. If $H - \phi \covsa^{-1} \prec 0$  then the inner supremum problem admits the unique optimal solution
	\be \label{eq:unique-cov}
	\cov\opt(\phi) = \phi (\phi \covsa^{-1} - H)^{-1},
	\ee
	which is obtained by solving the first-order optimality condition. By placing this optimal solution into the objective function and arranging terms, we have
	\be \label{eq:support-inner}
	\Sup{\cov \in \mc S_\m} ~ \Tr{H \cov} = \Inf{\substack{\phi \ge 0 \\ \phi \covsa^{-1} \succ H }}~ \phi \big( \rho - (\m - \msa)^\top \covsa^{-1} (\m - \msa) \big) - \phi \log \det (I - \covsa^\half H \covsa^\half /\phi).
	\ee
	We now argue that the above equality also holds when $\m$ is chosen such that $(\m - \msa)^\top \covsa^{-1} (\m - \msa) = \rho$. In this case, $\mc S_\m$ collapses into a singleton $\{\covsa\}$, and the left-hand side supremum problem attains the value $\Tr{H\covsa}$. The right-hand side infimum problem becomes
	\[
	    \Inf{\substack{\phi \ge 0 \\ \phi \covsa^{-1} \succ H }}~ - \phi \log \det (I - \covsa^\half H \covsa^\half /\phi).
	\]
	One can show using the l'Hopital rule that
	\[
	    \lim_{\phi \uparrow +\infty}~- \phi \log \det (I - \covsa^\half H \covsa^\half /\phi) = \Tr{H\covsa},
	\]
	which implies that the equality holds. Furthermore, when $H = 0$, the left-hand side of~\eqref{eq:support-inner} evaluates to 0, while the infimum problem on the right-hand side of~\eqref{eq:support-inner} also attains the optimal value of 0 asymptotically as $\phi$ decreases to 0. This implies that~\eqref{eq:support-inner} holds for all $H \in \mathbb{S}^p$ and for any $\m$ satisfying $(\m - \msa)^\top \covsa^{-1} (\m - \msa) \le \rho$. 
	
	The above line of argument shows that problem~\eqref{eq:KL-subproblem2} can now be expressed as the following maximin problem
	\[
	    \Sup{\m: (\m - \msa)^\top \covsa^{-1} (\m - \msa) \leq \rho} ~ \Inf{\substack{\phi \ge 0 \\ \phi \covsa^{-1} \succ H }}~ \m^\top H \m + \phi \big( \rho - (\m - \msa)^\top \covsa^{-1} (\m - \msa) \big) - \phi \log \det (I - \covsa^\half H \covsa^\half /\phi).
	\]
	For any $\phi\ge 0$ such that $\phi \covsa^{-1} \succ H$, the objective function is concave in $\m$. For any $\m$, the objective function is convex in $\phi$. Furthermore, the feasible set of $\mu$ is convex and compact, and the feasible set of $\phi$ is convex. As a consequence, we can apply Sion's minimax theorem~\cite{ref:sion1958minimax} to interchange the supremum and the infimum operators, and problem~\eqref{eq:KL-subproblem2} is equivalent to
	\[
	\Inf{\substack{\phi \ge 0 \\ \phi \covsa^{-1} \succ H }}~\left\{
	\begin{array}{l}
	\phi \rho - \phi \log \det  (I - \covsa^\half H \covsa^\half /\phi)  \\
	\hspace{2cm} + \Sup{\m:(\m - \msa)^\top \covsa^{-1} (\m - \msa) \leq \rho} ~ \m^\top H \m  - \phi (\m - \msa)^\top \covsa^{-1} (\m - \msa)
	\end{array}
	\right\}.
	\]
	For any $\phi$ which is feasible for the outer problem, the inner supremum problem is a convex quadratic optimization problem because $ \phi \covsa^{-1} \succ H$. Using a strong duality argument, the value of the inner supremum equals to the value of
	\begin{align*}
	    &\Inf{\nu \ge 0} ~ \left\{ \nu \rho - (\nu + \phi) \msa^\top \covsa^{-1} \msa + \Sup{\m}~ \m^\top (H - (\phi + \nu) \covsa^{-1}) \m + 2 (\nu + \phi) (\covsa^{-1} \msa)^\top \m \right\}\\
    =& \Inf{\nu \ge 0} ~ \nu \rho - (\nu + \phi) \msa^\top \covsa^{-1} \msa + (\nu + \phi)^2 (\covsa^{-1} \msa)^\top [(\phi + \nu) \covsa^{-1} - H]^{-1}  (\covsa^{-1} \msa ),
	\end{align*}
	where the equality follows from the fact that the unique optimal solution in the variable $\m$ is given by
	\be \label{eq:unique-mu}
	    (\phi + \nu) [ (\phi + \nu)\covsa^{-1} - H]^{-1}\covsa^{-1}\msa.
	\ee
	By combining two layers of infimum problem and using a change of variables $\dualvar \leftarrow \phi + \nu$,  problem~\eqref{eq:KL-subproblem2} can now be written as
	\be \label{eq:KL-subproblem3}
	\left\{
	\begin{array}{cl}
	\inf & \dualvar (\rho - \msa^\top \covsa^{-1} \msa ) +  \dualvar^2  \msa^\top \covsa^{-1} [\dualvar \covsa^{-1} - H]^{-1} \covsa^{-1} \msa  - \phi \log \det (I - \covsa^\half H \covsa^\half /\phi) \\
	\st & \phi \ge 0, \; \phi \covsa^{-1} \succ H, \; \dualvar - \phi \ge 0.
	\end{array}
	\right.
	\ee
	We now proceed to eliminate the multiplier $\phi$ from the above problem. To this end, rewrite the above optimization problem as
	\[
	\begin{array}{cl}
	\inf &\dualvar (\rho - \msa^\top \covsa^{-1} \msa ) + \dualvar^2  \msa^\top \covsa^{-1} [\dualvar \covsa^{-1} - H]^{-1} \covsa^{-1} \msa + g(\dualvar)\\
	\st & \dualvar \ge 0, \; \dualvar \covsa^{-1} \succ H,
	\end{array}
	\]
	where $g(\dualvar)$ is defined for every feasible value of $\dualvar$ as
	\be \label{eq:f-def}
	g(\dualvar) \Let \left\{
	\begin{array}{cl}
	\inf & - \phi \log \det (I - \covsa^{\half} H \covsa^\half / \phi) \\
	\st & \phi \ge 0, \; \phi \covsa^{-1} \succ H, \; \phi \le \dualvar.
	\end{array}
	\right.
	\ee
	Let $g_0 (\phi)$ denote the objective function of the above optimization, which is independent of $\dualvar$. Let $\sigma_1, \ldots, \sigma_p$ be the eigenvalues of $\covsa^\half H \covsa^\half$, we can write the function $g$ directly using the eigenvalues $\sigma_1, \ldots, \sigma_p$ as
	\[
	g_0(\phi) = -\phi \sum_{i = 1}^p \log (1 - \sigma_i/\phi).
	\]
	It is easy to verify by basic algebra manipulation that the gradient of $g_0$ satisfies
	\[
	\nabla g_0 (\phi) = \sum_{i=1}^p \left[ \log\left( \frac{\phi}{\phi - \sigma_i} \right) - \frac{\phi}{\phi - \sigma_i} \right] + p \leq 0,
	\]
	which implies that the value of $\phi$ that solves~\eqref{eq:f-def} is $\dualvar$, and thus $g (\dualvar) = - \dualvar \log \det (I - \covsa^{\half} H \covsa^\half / \dualvar)$. Substituting $\phi$ by $\dualvar$ in problem~\eqref{eq:KL-subproblem3} leads to the desired claim.
\end{proof}

\begin{lemma}[Optimal solution attaining $f(\beta)$] \label{lemma:extreme-KL}
    For any $(\msa, \covsa) \in \R^p \times \PD^p$, $\rho \in \R_{++}$ and $w \in \R^p$, $f(\beta)$ equals to the optimal value of the optimization problem
    \begin{subequations}
    \be \label{eq:KL-subproblem4}
    \left\{
	\begin{array}{cl}
	\Sup{\m, \cov \succ 0} & w^\top (\cov+\m\m^\top) w \\
	\st & \Tr{\cov \covsa^{-1}} - \log\det (\cov \covsa^{-1}) - p + (\m - \msa)^\top \covsa^{-1} (\m - \msa) \leq \rho,
	\end{array}
	\right.
	\ee
	which admits the unique optimal solution
    \be \label{eq:KL-mcov}
        \cov\opt =
        \dualvar\opt(\dualvar\opt \covsa^{-1} - ww^\top)^{-1}, \qquad \m\opt =\cov\opt \covsa^{-1} \msa,
    \ee
    with $\dualvar\opt > w^\top \covsa w$ being the unique solution of the nonlinear equation
    \begin{equation}\label{eq:KL-FOC}
        \rho = \frac{(w^\top \msa)^2 w^\top \covsa w}{(\dualvar - w^\top \covsa w)^2} + \frac{w^\top \covsa w}{\dualvar - w^\top \covsa w} + \log\Big( 1 - \frac{w^\top \covsa w}{\dualvar}\Big).
    \end{equation}
    Moreover, we have $\dualvar\opt \le w^\top \covsa w \big(1 + 2\rho + \sqrt{1 + 4 \rho (w^\top \msa)^2} \big)/(2\rho)$.
    \end{subequations}
\end{lemma}
\begin{proof}[Proof of Lemma~\ref{lemma:extreme-KL}]
    First, note that
    \begin{align*}
        f(\beta) & = \Sup{\QQ \in \mbb B} \EE_\QQ\left[(\beta^\top X - Y)^2 \right] = \Sup{\QQ \in \mbb B} \EE_\QQ \left[w^\top \xi \xi^\top w \right] = \Sup{ (\m, \cov) \in {\mbb U} } w^\top \left( \cov + \m\m^\top  \right) w ,
    \end{align*}
    which, by the definition of $\mbb U$ and definition~\eqref{def:KL}, equals to the optimal value of problem~\eqref{eq:KL-subproblem4}.

    From the duality result in Lemma~\ref{lemma:dual-KL}, problem~\eqref{eq:KL-subproblem4} is equivalent to
    \[
	\begin{array}{cl}
	\inf &\dualvar (\rho - \msa^\top \covsa^{-1} \msa ) + (\dualvar \covsa^{-1} \msa)^\top [\dualvar \covsa^{-1} - ww^\top]^{-1} ( \dualvar \covsa^{-1} \msa ) - \dualvar \log \det (I - \covsa^{\half} ww^\top \covsa^\half / \dualvar) \\
	\st & \dualvar \ge 0, \; \dualvar \covsa^{-1} \succ ww^\top.
	\end{array}
	\]
	Applying~\citet[Fact~2.16.3]{ref:bernstein2009matrix}, we have the equalities
	\begin{align*}
	     \det (I - \covsa^{\half} ww^\top \covsa^\half / \dualvar) &= 1 - w^\top \covsa w/\dualvar \\
	     (\dualvar \covsa^{-1} - ww^\top)^{-1} &= \dualvar^{-1} \covsa + \dualvar^{-2} \big( 1 - w^\top \covsa w/\dualvar \big)^{-1} \covsa w w^\top \covsa,
	\end{align*}
	and thus by some algebraic manipulations we can rewrite
	\be \label{eq:KL-extreme1}
	    f (\beta) =  \left\{
	\begin{array}{cl}
	\inf &\dualvar \rho  + \frac{\dualvar (w^\top \msa)^2}{\dualvar - w^\top \covsa w } - \dualvar \log \big( 1 - w^\top \covsa w/\dualvar \big) \\
	\st & \dualvar > w^\top \covsa w.
	\end{array}
	\right.
	\ee
	Let $f_0$ be the objective function of the above optimization problem. The gradient of $f_0$ satisfies
	\[
	    \nabla f_0(\dualvar) = \rho - \frac{(w^\top \msa)^2 w^\top \covsa w}{(\dualvar - w^\top \covsa w)^2} - \frac{w^\top \covsa w}{\dualvar - w^\top \covsa w} - \log\Big( 1 - \frac{w^\top \covsa w}{\dualvar}\Big).
	\]
	By the above expression of $\nabla f_0 (\dualvar)$ and the strict convexity of $f_0 (\dualvar)$, the value $\dualvar\opt$ that solves~\eqref{eq:KL-FOC} is also the unique minimizer of~\eqref{eq:KL-extreme1}. In other words, $f_0 (\kappa) = f(\beta)$.
	
	We now proceed to show that $(\m\opt, \cov\opt)$ defined as in~\eqref{eq:KL-mcov} is feasible and optimal. First, we prove feasibility of $(\m\opt, \cov\opt)$. By direct computation,
	\begin{subequations}
	\be \label{eq:KL-feasiblity1}
	    (\m\opt - \msa)^\top \covsa^{-1} (\m\opt - \msa) = \msa^\top (\covsa^{-1} \cov\opt - I) \covsa^{-1} (\cov\opt \covsa^{-1} - I) \msa = \frac{(\msa^\top w)^2 w^\top \covsa w}{(\dualvar\opt - w^\top \covsa w)^2}.
	\ee
	Moreover, because $\cov\opt \covsa^{-1} = I + (\dualvar\opt - w^\top \covsa w)^{-1} \covsa ww^\top$, we have
	\be \label{eq:KL-feasiblity2}
	    \Tr{\cov\opt \covsa^{-1}} - \log\det (\cov\opt \covsa^{-1}) - p = (\dualvar\opt - w^\top \covsa w)^{-1} w^\top \covsa w + \log \big(1 - \frac{w^\top \covsa w}{\dualvar\opt}\big).
	\ee
	\end{subequations}
	Combining~\eqref{eq:KL-feasiblity1} and~\eqref{eq:KL-feasiblity2}, we have
	\begin{align*}
	     \Tr{\cov\opt \covsa^{-1}} - \log\det (\cov\opt \covsa^{-1}) - p +  (\m\opt - \msa)^\top \covsa^{-1} (\m\opt - \msa) = \rho,
	\end{align*}
	where the first equality follows from the definition of $\mathds D$, and the second equality follows from the fact that $\dualvar\opt$ solves~\eqref{eq:KL-FOC}. This shows the feasibility of $(\m\opt, \cov\opt)$.
	
	Next, we prove the optimality of $(\m\opt, \cov\opt)$.
	Through a tedious computation, one can show that
	\begin{align*}
	    &w^\top (\cov\opt + (\m\opt)(\m\opt)^\top) w = w^\top (\cov\opt + \cov\opt \covsa^{-1} \msa \msa^\top \covsa^{-1} \cov\opt) w\\
	    =& w^\top \covsa w \Big(1 + \frac{w^\top \covsa w}{\dualvar\opt - w^\top \covsa w} \Big) + (\msa^\top w)^2 \Big( 1 + \frac{2 w^\top \covsa w}{\dualvar\opt - w^\top \covsa w} \Big) + \frac{(w^\top \msa)^2 (w^\top \covsa w)^2}{(\dualvar\opt - w^\top \covsa w)^2}\\
	    =& \frac{\dualvar\opt w^\top \covsa w}{\dualvar\opt - w^\top \covsa w} + \frac{(\dualvar\opt)^2 (\msa^\top w)^2}{(\dualvar\opt - w^\top \covsa w)^2} \\
	    =& \frac{\dualvar\opt w^\top \covsa w}{\dualvar\opt - w^\top \covsa w}  + \frac{\dualvar\opt (\msa^\top w)^2 w^\top \covsa w}{(\dualvar\opt - w^\top \covsa w)^2} + \frac{\dualvar\opt (\msa^\top w)^2}{\dualvar\opt - w^\top \covsa w} \\
	    =& \dualvar\opt \rho - \dualvar\opt \log \big( 1- \frac{w^\top \covsa w}{\dualvar\opt} \big)  + \frac{\dualvar\opt (\msa^\top w)^2}{\dualvar\opt - w^\top \covsa w} = f_0(\dualvar\opt) = f(\beta),
	\end{align*}
	where the antepenultimate equality follows from the fact that $\dualvar\opt$ solves~\eqref{eq:KL-FOC}, and the last equality holds because $\dualvar\opt$ is the minimizer of~\eqref{eq:KL-extreme1}. Therefore, $(\m\opt, \cov\opt)$ is optimal to problem~\eqref{eq:KL-subproblem4}. The uniqueness of~$(\m\opt, \cov\opt)$ now follows from the unique solution of $\cov$ and $\mu$ with respect to the dual variables from~\eqref{eq:unique-cov} and~\eqref{eq:unique-mu}, respectively.
	
	It now remains to show the upper bound on $\dualvar\opt$. Towards that end, we note that for any $\dualvar > w^\top \covsa w$, 
	\begin{align*}
	    0 &  = \rho - \frac{(w^\top \msa)^2 w^\top \covsa w}{(\dualvar\opt - w^\top \covsa w)^2} - \frac{w^\top \covsa w}{\dualvar\opt - w^\top \covsa w} - \log\Big( 1 - \frac{w^\top \covsa w}{\dualvar\opt}\Big)  > \rho - \frac{(w^\top \msa)^2 w^\top \covsa w}{(\dualvar\opt - w^\top \covsa w)^2} - \frac{w^\top \covsa w}{\dualvar\opt - w^\top \covsa w}.
	\end{align*}
	Solving the above quadratic inequality in the variable $\dualvar\opt - w^\top \covsa w$ yields the desired bound. This completes the proof.
\end{proof}

We are now ready to prove Proposition~\ref{prop:grad_f_D}.

\begin{proof}[Proof of Proposition~\ref{prop:grad_f_D}]
    The convexity of $f$ follows immediately by noting that it is the pointwise supremum of the family of convex functions $\EE_\QQ[(\beta^\top X - Y)^2]$ parametrized by $\QQ$.

    To prove the continuously differentiability and the formula for the gradient, recall the expression~\eqref{eq:KL-extreme1} for the function $f(\beta)$:
    \begin{equation}\label{opt:f}
        f (\beta)  = \left\{
	\begin{array}{cl}
	\inf &\dualvar \rho  + \frac{\dualvar (w^\top \msa)^2}{\dualvar - w^\top \covsa w } - \dualvar \log \big( 1 - w^\top \covsa w/\dualvar \big) \\
	\st & \dualvar > w^\top \covsa w.
	\end{array}
	\right.
    \end{equation}
Problem~\eqref{opt:f} has only one constraint. Therefore, LICQ (hence MFCQ) always holds, which implies that the Lagrange multiplier $\zeta_\beta$ of problem~\eqref{opt:f} is unique for any $\beta$. Also, it is easy to see that the constraint of problem~\eqref{opt:f} is never binding. So, $\zeta_\beta = 0$ for any $\beta$. The Lagrangian function $L_{\beta}: \R \times \R \rightarrow \R $ is given by
\begin{equation*}
L_{\beta} (\dualvar, \zeta) = \rho \dualvar + \frac{\omega_2 \dualvar}{\dualvar - \omega_1} - \dualvar \log\left(1- \frac{\omega_1}{\dualvar} \right) + \zeta (\omega_1 - \dualvar),
\end{equation*}
where $\omega_1 =  w^\top \covsa w$ and $\omega_2 =  (w^\top \msa)^2$.
The first derivative with respect to $\dualvar$ is 
\begin{equation*}
\frac{\mathrm d L_\beta}{\mathrm d \dualvar}(\dualvar , \zeta) = \rho - \frac{\omega_1 \omega_2}{(\dualvar- \omega_1)^2 } - \log\left( 1 - \frac{\omega_1}{\dualvar} \right) - \frac{\omega_1}{\dualvar - \omega_1} - \zeta.
\end{equation*}
The second derivative with respect to $\dualvar$ is
\begin{equation*}
\frac{\mathrm d^2 L_\beta}{\mathrm d\dualvar^2}(\dualvar , \zeta) = \frac{\omega_1}{(\dualvar - \omega_1)^3} \left( 2\omega_2 + \frac{\omega_1}{\dualvar}(\dualvar - \omega_1) \right) .
\end{equation*}
From the proof of Lemma~\ref{lemma:extreme-KL}, we have that the minimizer $\dualvar_\beta$ of problem~\eqref{opt:f} is precisely the $ \dualvar\opt$ defined by equation~\eqref{eq:KL-FOC} (below we write $\dualvar_\beta$ instead of $ \dualvar\opt$ to emphasize and keep track of the dependence on $\beta$). Therefore, for any $\beta$, the minimizer $\dualvar_\beta$ exists and is unique.
So, there exists some constant $\eta_{\beta} > 0$ such that
\begin{equation*}
\frac{\mathrm d^2 L_{\beta}}{\mathrm d\dualvar^2}(\dualvar_{\beta} , \zeta_{\beta}) \ge \eta_{\beta} >0.
\end{equation*}
Therefore, for any $\beta$, the strong second order condition at $\dualvar_{\beta}$ holds (see~\citet[Definition 6.2]{still2018lectures}). By \citet[Theorem 6.7]{still2018lectures},
\begin{equation}\label{eq:f_grad}
\nabla f (\beta) = \nabla_\beta L_\beta (\dualvar_\beta , \zeta_\beta) = \nabla_\beta L_\beta (\dualvar_\beta , 0)\quad \forall \beta\in \R^d.
\end{equation}
Then we compute
\begin{align*}
\nabla_w L_{\beta} (\dualvar, \zeta) & = \nabla_w \left[ \frac{\dualvar (w^\top \msa)^2}{\dualvar - w^\top \covsa w } - \dualvar \log \left( 1 - \frac{w^\top \covsa w}{\dualvar}  \right) + \zeta (w^\top \covsa w - \dualvar) \right] \\
& = \frac{2\dualvar \omega_2}{(\dualvar - \omega_1)^2} \covsa w + \frac{2\dualvar}{(\dualvar - \omega_1)} \msa \msa^\top w + \frac{2\dualvar}{(\dualvar - \omega_1)} \covsa w + 2\zeta \covsa w.
\end{align*}
Hence,
\begin{align*}
&\, \nabla_\beta L_\beta (\dualvar, \zeta) = \frac{d w}{d\beta}^\top \cdot \nabla_w L_{\beta} (\dualvar, \zeta) = [I_d \ \mathbf{0}_d ] \cdot \nabla_w L_{\beta} (\dualvar, \zeta),
\end{align*}
which, when combined with \eqref{eq:f_grad}, yields the desired gradient formula
\begin{equation*}
\nabla f (\beta) = \frac{2\dualvar_\beta \left(  \omega_2  \covsa w \!+\! (\dualvar_\beta \! - \!\omega_1 ) (\covsa \! +\! \msa \msa^\top) w \right)_{1:d}}{(\dualvar_\beta  - \omega_1 )^2}.
\end{equation*}
By \citet[Theorem 6.5]{still2018lectures}, the function $\beta \mapsto \dualvar_\beta$ is locally Lipschitz continuous, \ie, for any $\beta\in \R^d$, there exists $c_\beta,\epsilon_\beta > 0$ such that if $\norm{\beta' - \beta}_2 \le \epsilon_\beta$, then
\begin{equation*}\label{ineq:gamma_Lip}
|\dualvar_{\beta'} - \dualvar_{\beta}| \le c_\beta \norm{ \beta' - \beta }_2.
\end{equation*}
Note that $\omega_1$ and $\omega_2 $ are both locally Lipschitz continuous in $\beta$. Also, it is easy to see that $\dualvar_\beta > \omega_1 $ for any $\beta$. Thus, $\nabla f (\beta)$ is locally Lipschitz continuous in $\beta$.
\end{proof}

\begin{proof}[Proof of~\ref{prop:wass_interpolation}]
Noting that problem~\eqref{eq:mean_cov_interpolation} is the barycenter problem between two Gaussian distributions with respect to the Wasserstein distance, the proof then directly follows from \citet[\S6.2]{agueh2011barycenters} and \citet[Example~1.7]{ref:mccann1997convexity}.
\end{proof}

\begin{proof}[Proof of Proposition~\ref{prop:IR_W}]
    Again we omit the subscripts $\lambda$ and $\rho$.
    Reminding that $\xi = (X, Y)$, we find
    \begin{equation}\label{eq:wass_f}
        \begin{split}
        &\Sup{\QQ \in \mbb B} \EE_\QQ[(\beta^\top X - Y)^2] = \Sup{\QQ \in \mbb B} \EE_\QQ[(w^\top \xi)^2] \\
        =& \left\{
	\begin{array}{cl}
	\inf & \dualvar \big(\rho - \|\msa\|_2^2 -  \Tr{\covsa} \big) + z + \Tr{Z} \\
	\st & \dualvar \in \R_+, \; z \in \R_+, \; Z \in \PSD^p \\
	& \begin{bmatrix} \dualvar I - ww^\top & \dualvar \covsa^\half \\ \dualvar \covsa^\half & Z \end{bmatrix} \succeq 0, \; \begin{bmatrix} \dualvar I - ww^\top & \dualvar \msa \\ \dualvar \msa^\top & z \end{bmatrix} \succeq 0
	\end{array}
	\right. \\
	=&\left\{
	    \begin{array}{cl}
	        \inf & \dualvar \big(\rho - \|\msa\|_2^2 -  \Tr{\covsa} \big) + \dualvar^2 \msa^\top (\dualvar I - ww^\top)^{-1} \msa + \dualvar^2 \Tr{\covsa (\dualvar I - ww^\top)^{-1}} \\
	        \st & \dualvar \ge \| w \|_2^2 ,
	    \end{array}
	\right.
    \end{split}
    \end{equation}
    where the second equality follows from \citet[Lemma 2]{ref:kuhn2019wasserstein}. By applying~\citet[Fact~2.16.3]{ref:bernstein2009matrix}, we find
	\begin{equation}\label{eq:sherman_morrison}
	     (\dualvar I - ww^\top)^{-1} = \dualvar^{-1} I + \dualvar^{-2} \big( 1 - \|w\|_2^2/\dualvar \big)^{-1}  w w^\top .
	\end{equation}
	Combining \eqref{eq:wass_f} and \eqref{eq:sherman_morrison}, we get
	\[
	    \Sup{\QQ \in \mbb B} \EE_\QQ[(\beta^\top X - Y)^2] =
	    \left\{
	        \begin{array}{cl}
	            \inf & \dualvar \rho + \dualvar  w^\top (\covsa + \msa \msa^\top) w / (\dualvar - \| w \|_2^2 )\\
	            \st & \dualvar \ge \| w\|_2^2.
	        \end{array}
	    \right.
	\]
	One can verify through the first-order optimality condition that the optimal solution $\dualvar\opt$ is
	\[
	    \dualvar\opt = \| w \|_2 \left( \| w \|_2 + \sqrt{\frac{w^\top (\covsa + \msa \msa^\top) w }{\rho}} \right),
	\]
	and by replacing this value $\dualvar\opt$ into the objective function, we find
	\[
	    \Sup{\QQ \in \mbb B} \EE_\QQ[(\beta^\top X - Y)^2] = \big( \sqrt{w^\top (\covsa + \msa\msa^\top) w} + \sqrt{\rho}\|w\|_2 \big)^2,
	\]
	which then completes the proof.
\end{proof}
\newpage
\subsection{Proof of Section~\ref{sec:SI}}

\begin{lemma}[Compactness] \label{lemma:D-set}
For $k\in \{\rm S, \rm T\}$, the set
\[\mbb V_k = \{(\mu, M) \in \R^p\times \mbb S^p_{++} : M- \mu \mu^\top \in \mbb S_{++}^p, \mathds{D}((\mu, M-\mu\mu^\top) \parallel (\msa_k, \covsa_k)) \leq \rho_{k} \} \] 
is convex and compact. Furthermore, the set
\[\mbb V \Let \{(\m, M) \in \R^p\times \mbb S^p_{++} :(\m, M- \m\m^\top) \in \mbb U_{\rho_{\rm S}, \rho_{\rm T}}\} \]
is also convex and compact.
\end{lemma}

\begin{proof}[Proof of Lemma~\ref{lemma:D-set}]
	For any $(\mu, M) \in \R^p\times \mbb S^p_{++} $ such that $M - \m\m^\top \in \PD^p$, we find
	\begin{align}
		&\mathds D\big((\m, M-\m\m^\top) \parallel (\msa_k, \covsa_k)\big) \notag\\
		=& (\m - \msa_k)^\top\covsa^{-1}_k (\m - \msa_k) +\Tr{(M - \m\m^\top) \covsa^{-1}} - \log\det ((M - \m\m^\top) \covsa_k^{-1}) - p \notag\\
		=& \msa_k^\top \covsa_k^{-1} \msa_k - 2 \msa_k^\top \covsa_k^{-1} \m+\Tr{M\covsa_k^{-1}} - \log\det (M\covsa_k^{-1}) - \log(1- \m^\top M^{-1} \m) - p \label{eq:divergence_mu_M},
	\end{align}
	where in the last expression, we have used the determinant formula~\citep[Fact~2.16.3]{ref:bernstein2009matrix} to rewrite
	\[
	    \det(M - \m\m^\top) = (1 - \m^\top M^{-1} \m) \det M.
	\]

	Because $M - \m\m^\top \in \PD^p$, one can show that $1 - \m^\top M^{-1} \m > 0$ by invoking the Schur complement, and as such, the logarithm term in the last expression is well-defined. Moreover, we can write 
	\begin{align}
	\mbb V_k = \left\{(\m, M) :
	\begin{array}{l}
	(\m, M) \in \R^p \times \PD^p,~M - \m\m^\top \in \PD^p,~\exists t \in \R_+: \\
	\msa_k^\top \covsa_k^{-1} \msa_k - 2 \msa_k^\top \covsa_k^{-1} \m+\Tr{M\covsa_k^{-1}} - \log\det (M\covsa_k^{-1}) - \log(1- t) - p \leq \rho \\
	\begin{bmatrix} M & \m \\ \m^\top & t \end{bmatrix} \succeq 0	\end{array}
	\right\}, \label{eq:D-refor}
	\end{align}
	which is a convex set. Notice that by Schur complement, the semidefinite constraint is equivalent to $t \ge \m^\top M^{-1} \m$.
	
	Next, we show that $\mbb V_k$ is compact. Denote by $\mbb U_k =  \{ (\m, \cov)\in \R^p\times \PSD^p:  \mathds{D}( (\m, \cov)\! \parallel\! (\msa_k, \covsa_k) )\le \rho_k \}$. Then, it is easy to see that $\mbb V_k$ is the image of $\mbb U_k$ under the continuous mapping $(\m, \cov) \mapsto (\m, \cov + \m\m^\top)$. Therefore, it suffices to prove the compactness of $\mbb U_k$. Towards that end, we note that 
	\[ {\mathds D} \big( (\m, \cov) \parallel (\msa_k, \covsa_k) \big) =(\msa_k - \m)^\top\covsa_k^{-1} (\msa_k - \m) +  \Tr{\cov \covsa_k^{-1}} - \log\det (\cov \covsa_k^{-1}) - p  \]
	is a continuous and coercive function in $(\m, \cov)$. Thus, as a level set of ${\mathds D} \big( (\m, \cov) \parallel (\msa_k, \covsa_k) \big)$, $\mbb U_k$ is closed and bounded, and hence compact.
	
	To prove the last claim, by the definitions of $\mbb V$ and $\mbb U_{\rho_{\rm S}, \rho_{\rm T}}$ we write
	\begin{align}
	    &\mbb V = \{(\m, M) \in \R^p\times \mbb S^p_{++} :(\m, M- \m\m^\top) \in \mbb U_{\rho_{\rm S}, \rho_{\rm T}}\}\notag\\
	    = &\{(\m, M) \in \R^p\times \mbb S^p_{++} : (\m, M) \in \mbb V_{\rm S}\} \cap \{(\m, M) \in \R^p\times \mbb S^p_{++}: (\m, M) \in \mbb V_{\rm T} \}\cap \{(\m, M) \in \R^p\times \mbb S^p_{++}: M\succeq \eps I\} \label{eq:V_intersect}.
	\end{align}
	The convexity of $\{(\m, M) \in \R^p\times \mbb S^p_{++} :(\m, M- \m\m^\top) \in \mbb U_{\rho_{\rm S}, \rho_{\rm T}}\}$ then follows from the convexity of the three sets in~\eqref{eq:V_intersect}.
	Furthermore, from the first part of the proof, we know that both $\{(\m, M) \in \R^p\times \mbb S^p_{++} : (\m, M) \in \mbb V_{\rm S}\}$ and $\{(\m, M) \in \R^p\times \mbb S^p_{++}: (\m, M) \in \mbb V_{\rm T} \}$ are compact sets, so is their intersection. Also, the last set $\{(\m, M) \in \R^p\times \mbb S^p_{++}: M\succeq \eps I\}$ in \eqref{eq:V_intersect} is closed. Since any closed subset of a compact set is again compact, we conclude that $\mbb V$ is compact. This completes the proof.

\end{proof}

\begin{proof}[Proof of Theorem~\ref{thm:ls-kl}]
    As $\xi = (X, Y)$, we can rewrite
    \begin{subequations}
    \begin{align}
        & \Min{\beta \in \R^d} \Sup{\QQ \in \mbb B_{\rho_{\rm S}, \rho_{\rm T}}} \EE_\QQ[(\beta^\top X - Y)^2] \\
        =&\Min{\beta \in \R^d} 
        \Sup{\QQ \in \mbb B_{\rho_{\rm S}, \rho_{\rm T}}} \!\begin{bmatrix} \beta \\ -1 \end{bmatrix}^\top \EE_\QQ[\xi \xi^\top] \begin{bmatrix} \beta \\ -1 \end{bmatrix}
        \\
        = &\Min{\beta \in \R^d} \Sup{(\m, M- \m\m^\top) \in \mbb U_{\rho_{\rm S}, \rho_{\rm T}}} \begin{bmatrix} \beta \\ -1 \end{bmatrix}^\top M \begin{bmatrix} \beta \\ -1 \end{bmatrix} \notag\\
        = &\Min{\beta \in \R^d} \Sup{(\m, M) \in \mbb V} \begin{bmatrix} \beta \\ -1 \end{bmatrix}^\top M \begin{bmatrix} \beta \\ -1 \end{bmatrix} \notag\\
        = &\Sup{(\m, M) \in \mbb V}\Min{\beta \in \R^d}  \begin{bmatrix} \beta \\ -1 \end{bmatrix}^\top M \begin{bmatrix} \beta \\ -1 \end{bmatrix} \label{eq:aux-1}\\
        =& \Sup{(\m, M) \in \mbb V}~M_{YY} - M_{XY}^\top M_{XX}^{-1} M_{XY} \label{eq:aux-2}
\end{align}
\end{subequations}
where~\eqref{eq:aux-1} follows from the Sion's minimax theorem, which holds because the objective function is convex in $\beta$, concave in $M$, and Lemma~\ref{lemma:D-set}. Equation~\eqref{eq:aux-2} exploits the unique optimal solution in $\beta$ as $\beta\opt = M_{XX}^{-1} M_{XY}$, in which the matrix inverse is well defined because $M \succ 0$ for any feasible $M$.

Finally, after an application of the Schur complement reformulation to~\eqref{eq:aux-2}, the nonlinear semidefinite program in the theorem statement follows from representations~\eqref{eq:D-refor} and \eqref{eq:V_intersect}. This completes the proof.
\end{proof}

\begin{proof}[Proof of Proposition~\ref{prop:minimum_radius}]
It is well-known that the space of probability measures equipped with the Wasserstein distance $W_2 $ is a geodesic metric space (see \citet[Section 7]{ref:villani2008optimal} for example), meaning that for any two probability distributions $\mc N_0$ and $\mc N_1$, there exists a constant-speed geodesic curve $[0,1] \ni a\mapsto \mc N_a$ satisfying 
\[ W_2 ( \mc N_a, \mc N_{a'} ) = |a - a'| W_2 ( \mc N_0, \mc N_1 ) \quad\forall a,a'\in [0,1].\]

The claim follows trivially if $W_2 ( \mc N_{\rm S}, \mc N_{\rm T} ) \le \sqrt{\rho_{\rm S}}$. Therefore, we assume $W_2 ( \mc N_{\rm S}, \mc N_{\rm T} ) >\sqrt{\rho_{\rm S}}$.

Consider the the geodesic $\mc N_t$ from $\mc N_0 = \mc N_{\rm S}$ to $\mc N_1 = \mc N_{\rm T}$. Also, denote by $\mbb U_k =  \{ (\m, \cov)\in \R^p\times \PSD^p:  \mathds{D}( (\m, \cov)\! \parallel\! (\msa_k, \covsa_k) )\le \rho_k \}$ for $k\in \{\rm S, \rm T\} $. Then, $\mbb U_{\rm S}$ and $\mbb U_{\rm T} $ has empty intersection if and only if 
\[ W_2 ( \mc N_a , \mc N_{\rm S} ) \le \sqrt{\rho_{\rm S}} \Longrightarrow W_2 ( \mc N_a , \mc N_{\rm T} ) > \sqrt{\rho_{\rm T}} \quad\forall a\in [0,1],\]
which is in turn equivalent to 
\[ a W_2 ( \mc N_{\rm T} , \mc N_{\rm S} ) \le \sqrt{\rho_{\rm S}} \Longrightarrow (1-a) W_2 ( \mc N_{\rm T} , \mc N_{\rm S} ) \le \sqrt{\rho_{\rm T}} \quad \forall a\in[0,1].\]
Picking $a = \frac{\sqrt{\rho_{\rm S}}}{W_2 ( \mc N_{\rm T} , \mc N_{\rm S} )} \in (0,1)$, then we have
\begin{align*}
    \left( 1 - \frac{\sqrt{\rho_{\rm S}}}{W_2 ( \mc N_{\rm T} , \mc N_{\rm S} )} \right) W_2 ( \mc N_{\rm T} , \mc N_{\rm S} ) \le \sqrt{\rho_{\rm T}}.
\end{align*}
The above inequality can be rewritten as
\[ W_2 ( \mc N_{\rm T} , \mc N_{\rm S} ) \le \sqrt{\rho_{\rm S}} + \sqrt{\rho_{\rm T}}, \]
which contradicts with our supposition
\[\rho_{\rm T}\geq \left(\sqrt{\mathds  W((\msa_{\rm S}, \covsa_{\rm S}) \parallel (\msa_{\rm T}, \covsa_{\rm T}))} - \sqrt{\rho_{\rm S}}\right)^{2}.\]
Thus, $\mbb U_{\rm S}$ and $\mbb U_{\rm T}$ has non-empty intersection.
\end{proof}

\begin{proof}[Proof of Theorem~\ref{thm:ls-w}]
    As $\xi = (X, Y)$, we can rewrite
    \begin{subequations}
    \begin{align}
    &\Min{\beta \in \R^d} \Sup{\QQ \in \mbb B_{\rho_{\rm S}, \rho_{\rm T}}(\Pnom)} \EE_\QQ[(\beta^\top X - Y)^2] \\
        = & \Min{\beta \in \R^d} \Sup{(\m, M - \mu \mu^\top) \in \mbb U_{\rho_{\rm S}, \rho_{\rm T}}} \begin{bmatrix} \beta \\ -1 \end{bmatrix}^\top M \begin{bmatrix} \beta \\ -1 \end{bmatrix} \notag\\
        = & \Sup{(\m, M - \m\m^\top) \in \mbb U_{\rho_{\rm S}, \rho_{\rm T}}}\Min{\beta \in \R^d}  \begin{bmatrix} \beta \\ -1 \end{bmatrix}^\top M \begin{bmatrix} \beta \\ -1 \end{bmatrix} \label{eq:auxg-1}\\
        = & \Sup{(\m, M - \m\m^\top) \in \mbb U_{\rho_{\rm S}, \rho_{\rm T}}}~M_{YY} - M_{XY}^\top M_{XX}^{-1} M_{XY} 
        \label{eq:auxg-3}
\end{align}
\end{subequations}
where~\eqref{eq:auxg-1} follows from the Sion's minimax theorem, which holds because the objective function is convex in $\beta$, concave in $M$, and the set $\mbb U_{\rho_{\rm S}, \rho_{\rm T}}$ is compact~~\citep[Lemma~A.6]{ref:abadeh2018wasserstein}. Equation~\eqref{eq:auxg-3} exploits the unique optimal solution in $\beta$ as $\beta\opt = M_{XX}^{-1} M_{XY}$, in which the matrix inverse is well defined because $M - \m\m^\top \succeq \eps I$ for any feasible $M$.
\end{proof}

\section{Additional Numerical Results}
The corresponding codes are available anonymously at~\url{https://github.com/RAO-EPFL/DR-DA.git}.
In the following the details of the datasets used in Section~\ref{sec:numerical} are presented.
    \begin{itemize}[leftmargin = 3mm]
      \item \textbf{Uber$\&$Lyft\footnote{Available publicly at~\url{https://www.kaggle.com/brllrb/uber-and-lyft-dataset-boston-ma}}} has~$N_{\rm S} = 5000$ instances in the source domain and~5000 available samples in the target domain. 
      \item \textbf{US Births~(2018)\footnote{Available publicly at~\url{https://www.kaggle.com/des137/us-births-2018}}} has~$N_{\rm S} = 5172$ samples in the source domain and~4828 available samples in the target domain.
       \item \textbf{Life Expectancy{\footnote{Available publicly at~\url{https://www.kaggle.com/kumarajarshi/life-expectancy-who}}}} has~$N_{\rm S} = 1407$ instances in the source domain and~242 available samples in the target domain.
     \item \textbf{House Prices in King County\footnote{Available publicly at~\url{https://www.kaggle.com/c/house-prices-advanced-regression-techniques/data}}} has~$N_{\rm S} = 543$ instances in the source domain and~334 available samples in the target domain.
         \item \textbf{California Housing Prices\footnote{The modified version that we use is available publicly at~\url{https://www.kaggle.com/camnugent/california-housing-prices} and the original dataset is available publicly at~\url{https://www.dcc.fc.up.pt/~ltorgo/Regression/cal_housing.html}}} has $N_{\rm S} = 9034$ instances in the source domain, and~6496 available instances in the target domain.
    \end{itemize}
    \begin{figure}
     \centering
     \begin{subfigure}[b]{0.45\textwidth}
         \centering
         \includegraphics[width=\textwidth]{Figures/birth_USAcumloss.pdf}
         \caption{US Births (2018)}
         \label{fig:us_births}
     \end{subfigure}\hfill
     \begin{subfigure}[b]{0.45\textwidth}
         \centering
         \includegraphics[width=\textwidth]{Figures/life_expectancycumloss.pdf}
         \caption{Life Expectancy}
         \label{fig:life_exp}
     \end{subfigure}\hfill
     \begin{subfigure}[b]{0.45\textwidth}
         \centering
         \includegraphics[width=\textwidth]{Figures/housescumloss.pdf}
         \caption{House Prices in KC}
         \label{fig:houses}
     \end{subfigure}\hfill
     \begin{subfigure}[b]{0.45\textwidth}
         \centering
         \includegraphics[width=\textwidth]{Figures/california_housingcumloss.pdf}
         \caption{California Housing}
         \label{fig:california_housing}
     \end{subfigure}\hfill
        \caption{Cumulative loss averaged over 100 runs on logarithmic scale}
        \label{fig:cum_loss_all}
\end{figure}
Figure~\ref{fig:cum_loss_all} demonstrates how the average cumulative loss in~\eqref{eq:cumulative-loss} grows over time for the US Births (2018), Life Expectancy, House Prices in KC and California Housing datasets. 
The results suggest that the IR-WASS and SI-WASS experts perform favorably over the competitors in that their cumulative loss at each time step is lower than that of most other competitors.  
    \newpage
\bibliography{bibliography}
\bibliographystyle{icml2021}